\newcommand{\supp}{supp}
\newcommand\defeq{:=}
\DeclareMathOperator*{\argmin}{\mbox{arg\;min}}
\newcommand\vect[1]{\mathbf #1}
\newcommand{\vu}{\vect{u}}  
\newcommand{\vv}{\vect{v}}  
\newcommand{\vx}{\vect{x}}  
\newcommand{\vy}{\vect{y}}  
\newcommand{\vz}{\vect{z}}
\newcommand{\mD}{\mathbf{D}}
\newcommand{\mW}{\mathbf{W}}
\newcommand{\signalsize}{N}
\newcommand{\graphsigs}{\mathbb{R}^{\mathcal{V}}} 
\newcommand{\edgesigs}{\mathbb{R}^{\mathcal{E}}}
\newcommand{\edgesupport}{\mathcal{S}}
\newcommand{\dataset}{\mathcal{D}}
\newcommand{\edges}{\mathcal{E}}
\newcommand{\edgeset}{\mathcal{S}}
\newcommand{\nodes}{\mathcal{V}}
\newcommand{\cluster}{\mathcal{C}}
\newcommand{\graph}{\mathcal{G}}
\newcommand{\samplingset}{\mathcal{M}}
\newcommand{\boundary}{\partial \mathcal{F}}
\newcommand{\partition}{\mathcal{F}}
\newcommand{\xsig}{\vx}
\newcommand{\xsigval}[1]{x[{#1}]}
\newcommand{\mY}{\mathbf{Y}}
\newcommand{\mU}{\mathbf{U}}
\newcommand{\mQ}{\mathbf{Q}}
\algnewcommand\algorithmicinput{\textbf{Input:}}
\algnewcommand\INPUT{\item[\algorithmicinput]}
\algnewcommand\algorithmicoutput{\textbf{Output:}}
\algnewcommand\OUTPUT{\item[\algorithmicoutput]}
\begin{document}

\title{Semi-Supervised Learning via Sparse Label Propagation}

\author{\name Alexander Jung \email first.last@aalto.fi \\
       \addr Department of Computer Science\\
       Aalto University\\
       Espoo, FI
        \AND
       \name Alfred O. Hero III \email last@eecs.umich.edu  \\
      \addr Department of EE and CS\\
       The University of Michigan\\
       Ann Arbor, MI
       \AND
       \name Alexandru Mara \email first.last@aalto.fi  \\
      \addr Department of Computer Science\\
       Aalto University\\
       Espoo, FI
       \AND
      \name Saeed Jahromi \email first.last@aalto.fi  \\
      \addr Department of Computer Science\\
       Aalto University\\
       Espoo, FI}

\editor{TBD}

\maketitle

\begin{abstract}
This work proposes a novel method for semi-supervised learning from partially labeled massive 
network-structured datasets, i.e., big data over networks. We model the underlying hypothesis, 
which relates data points to labels, as a graph signal, defined over some graph (network) structure intrinsic to the dataset. 
Following the key principle of supervised learning, i.e., ``similar inputs yield similar outputs'', we 
require the graph signals induced by labels to have small total variation. 
Accordingly, we formulate the problem of learning the labels of data points as a non-smooth convex optimization 
problem which amounts to balancing between the empirical loss, i.e., the discrepancy with some partially available 
label information, and the smoothness quantified by the total variation of the learned graph signal. 
We solve this optimization problem by appealing to a recently proposed 
preconditioned variant of the popular primal-dual method by Pock and Chambolle, which results in a 
sparse label propagation algorithm. 
This learning algorithm allows for a highly scalable implementation as message passing 
over the underlying data graph. By applying concepts of compressed sensing to the learning problem, 
we are also able to provide a transparent sufficient condition on the underlying 
network structure such that accurate learning of the labels is possible. We also 
present an implementation of the message passing formulation allows for a 
highly scalable implementation in big data frameworks. 

\end{abstract}

\begin{keywords}
  semi-supervised learning, convex optimization, total variation, complex networks
\end{keywords}

\section{Introduction}
\label{sec_intro}
 
A variety of modern technological systems generate (heterogeneous) data at unprecedented scale, i.e., 
``Big Data'' ~\citep{McKinseyBigdata,DonohoCursesBlessings,HadoopDefinitiveGuide,BigDataNetworksBook}. 
While lacking a precise formal definition, many Big Data problems share four main characteristics: 
(i) large data volume, (ii) high speed of data generation, (iii) heterogeneous data, i.e., 
partially labeled, mixture of audio, video and text data and (iv) noisy data, i.e., there are statistical 
variations due to missing labels, labelling errors, or 
poor data curation ~\citep{HadoopDefinitiveGuide}. Moreover, in a wide range of big 
data applications, e.g., social networks, sensor networks, communication networks, 
and biological networks, an intrinsic graph (or network) structure is present. 
This graph structure reflects either the physical properties of a system 
(e.g., public transportation networks) or statistical dependencies 
(e.g., probabilistic graphical models for bioinformatics). 
Quite often, these two notions of graph structure coincide: 
in a wireless sensor network, the graph modelling the communication links between nodes and 
the graph formed by statistical dependencies between sensor measurements 
resemble each other since both graphs are induced by the mutual proximity of the nodes ~\citep{WieselHero2012,Moldaschl2014,Quan20009}. 

On the algorithmic side, having a graph model for the observed data points 
facilitates scalable distributed data processing in the form of message 
passing on the graph. At a higher level, graph models are suitable for heterogeneous datasets 
containing data points of diverse nature, since they only require a weak notion 
of similarity between data points. Moreover, this notion of similarity enables the graph models to 
capitalize on massive amounts of unlabeled data via semi-supervised learning ~\citep{SemiSupervisedBook}. 
In particular, semi-supervised learning exploits the information contained in 
large amounts of unlabeled data points by considering 
their similarities to a small number of labeled data points. 

In this paper, we consider the problem of (transductive) semi-supervised learning using  
a known graph model for the raw data. This graph or network model, referred to as the data graph in what follows, 
contains individual data points as its nodes. Two nodes of the data graph which represent similar data points are 
connected by an edge.  The data graph typically contains a small number of data points for which label information is available. However, 
since acquiring label information is typically expensive, typically the majority of data points in the data graph are unlabeled. 
The goal of semi-supervised learning is then to learn the labels of those unlabeled data points by exploiting the 
label dependency information reflected by the topology of the data graph and the available label information for few data points. 
We tackle this learning problem by casting the dataset as a graph signal. In this graph signal model, 
the (unknown) labels of the data points are identified as the signal values of a 
graph signal whose domain is the node set of the data graph. The graph signal is 
then required to be nearly constant over 
well-connected subset of nodes (clusters).

Representing the labels of data points as a smooth graph signal is in line with the informal smoothness assumption underlying 
many supervised learning methods ~\citep{SemiSupervisedBook}, i.e., similar data points should have similar labels. 
In other words, the graph signal representing the underlying labeling is expected to 
reflect the topology of the underlying data graph, i.e., labels of closely connected nodes 
are highly correlated and thus these signals form close-knit clusters or communities ~\citep{Fortunato2009}. 
There are different options for quantifying the smoothness of a graph signal. A prominent line of work 
uses the graph Laplacian quadratic form of a graph signal as smoothness measure. 
In particular, the idea behind some well-known label propagation (LP) methods ~\citep{Zhu02learningfrom,SemiSupervisedBook} 
is to minimize this quadratic form while also enforcing consistency with the initial label information 
provided for a small number of data points. In contrast, the approach proposed in this paper is 
based on using the total variation ~\citep{shuman2013} of a graph signal, which models clustered 
graph signals more accurately compared to the Laplacian quadratic form ~\citep{SemiSupervisedBook}. 
Indeed, using TV as a smoothness measure allows the graph signal to change significantly on boundaries 
between clusters, as long as the number of boundary nodes is small compared to non-boundary nodes.  

\subsection{Existing Work} 

The problem of semi-supervised learning based on graph models is discussed in-depth in ~\citep{SemiSupervisedBook}, 
which also includes the popular LP method. Due to its simplicity and scalability, many variations of LP have been proposed. Underlying  
LP and its variants is the smoothness measured called the graph Laplacian quadratic form. 
In this proposed framework, LP can be interpreted as solving a system of linear equations obtained 
from the optimality condition for minimizing a quadratic function ~\citep{ChenMoura2014,ChenSanKov2015,ChenVarma}. 
In contrast, in this paper we will use total variation for quantifying the smoothness of a graph signal. 
This still results in solving a convex optimization problem, however with optimality conditions that lead to non-linear 
equations. Related previous approaches are the works ~\citep{SharpnackJMLR2012,TrendGraph,NetworkLasso} for general graph models, 
as well as a prominent line of work on total variation-based image processing ~\citep{chambolle2004algorithm,pock_chambolle_2016,PrecPockChambolle2011}. 
What sets our work apart from ~\citep{chambolle2004algorithm,pock_chambolle_2016,PrecPockChambolle2011}, 
which consider only grid graphs, is that we allow for general graph topology as in ~\citep{SharpnackJMLR2012,TrendGraph,NetworkLasso}. 
However, in contrast to ~\citep{SharpnackJMLR2012,TrendGraph}, the partial label information represented by graph signal values, 
is available only on a small subset of nodes. The proposed learning algorithm is closely related to the network Lasso ~\citep{NetworkLasso}, 
as will be discussed in Section \ref{sec_setup}.

\subsection{Contributions and Outline} 

In Section \ref{sec_setup}, we formulate (transductive) semi-supervised learning 
over graph-based models as a convex optimization problem. By applying a preconditioned 
variant of the popular primal-dual method of Pock and Chambolle to this problem, 
we obtain a sparse LP algorithm in Section \ref{sec_spl_Alg}. 
We then formulate this learning algorithm as message passing, requiring only local 
computations over the data graph and making it perfectly 
scalable to massive datasets (big data). In Section \ref{sec_main_results}, we present a 
sufficient condition on the available partial label information and data graph topology 
such that accurate learning is guaranteed by sparse LP. 
We also discuss how to implement sparse LP in a state-of-the-art big data framework. 
In order to verify the accuracy and scalability of the proposed learning algorithm, we conducted numerical 
experiments, with the obtained results reported in Section \ref{experimental_results}. 

\subsection{Notation}
Given a vector $\vx\!=\!(x_{1},\ldots,x_{n})^{T}\in \mathbb{R}^{n}$, we define the norms $\| \vx \|_{1} \defeq \sum_{l=1}^{n} |x_{l}|$ and 
$\| \vx \|_{2} \defeq \sqrt{ \sum_{l=1}^{n} (x_{l})^{2}}$, respectively. 
The spectral norm of a matrix $\mathbf{D}$ is denoted $\| \mD \|_{2} \defeq \sup_{\| \vx \|_{2}=1} \|\mD \vx \|_{2}$. 
For a positive semidefinite (psd) matrix $\mQ$, with spectral decomposition $\mQ = \mU{\rm diag} \{ q_{i} \}_{i=1}^{n} \mU^{T}$, 
we define its square root as $\mQ^{1/2} \defeq \mU {\rm diag} \{ \sqrt{q_{i}} \}_{i=1}^{n} \mU^{T}$. 
For a convex function $g(\vx)$, we denote its subdifferential at the point $\vx_{0} \in \mathbb{R}^{n}$ by 
\begin{equation} 
\partial g(\vx_{0}) \defeq \{ \vy \in \mathbb{R}^{n}: g(\vx) \geq g(\vx_{0})\!+\!\vy^{T}(\vx\!-\!\vx_{0}) \mbox{ for any } \vx \} \subseteq \mathbb{R}^{n}. \nonumber 
\end{equation}  

\section{The Semi-Supervised Learning Problem}
\label{sec_setup}

We consider a heterogeneous dataset $\dataset\!=\!\{ z_{i} \}_{i=1}^{\signalsize}\!\subseteq\!\mathcal{Z}$ 
consisting of $\signalsize$ data points $z_{i}\!\in\!\mathcal{Z}$, 
which might be of significantly different nature, e.g., the data point $z_{1} \in \mathbb{R}^{d}$ is a $d$-dimensional feature vector, data point $z_{2}$ is a continuous-time signal 
(i.e., $z_{2}: \mathbb{R}\!\rightarrow\!\mathbb{R}$) and another data point $z_{3}$ might represent the bag-of-words histogram of 
a text document. Thus, we assume the input space $\mathcal{Z}$ is rich enough to accommodate 
strongly heterogeneous data. Associated with the dataset $\mathcal{D}$ is an undirected weighted graph 
$\graph = (\nodes, \edges, \mathbf{W})$, referred to as the data graph, 
with node set $\nodes=\{1,\ldots,\signalsize\}$, edge set $\edges \subseteq \nodes \times \nodes$, 
and symmetric weight matrix $\mathbf{W}\in \mathbb{R}^{\signalsize \times \signalsize}$. 
The nodes represent the data points, i.e., each node $i \in \nodes$ corresponds to a unique datapoint $z_{i} \in \dataset$. 
An undirected edge $\{i,j\} \in \edges$ encodes some notion of 
(physical or statistical) proximity of neighboring data points $z_{i}$ and $z_{j}$. 
Given an edge $\{i,j\} \in \edges$, the nonzero value $W_{i,j}\!>\!0$ 
represents the strength of the connection $\{i,j\} \in \edges$. 
The edge set $\edges$ can be read off 
from the non-zero pattern of the weight matrix $\mathbf{W} \!\in\! \mathbb{R}^{\signalsize \times \signalsize}$ since 
\begin{equation}
\label{equ_edge_set_support_weights}
\{ i , j \} \in \edges \mbox{ if and only if } W_{i,j}  > 0.  
\end{equation} 
We assume the data graph $\graph$ to be simple, i.e., it contains no self-loops, so that $W_{i,i}\!=\!0$ for all $i\!\in\!\nodes$. 

The neighbourhood $\mathcal{N}(i)$ and weighted degree (strength) $d_{i}$ of node $i \in \nodes$ are defined, respectively, 
as ~\citep{NewmannBook}
\begin{equation} 
\label{equ_def_neighborhood}
\mathcal{N}(i) \defeq \{ j \in \nodes : \{i,j\} \!\in\!\edges \} \mbox{, and } d_{i} \defeq \sum_{j \in \mathcal{N}(i)} W_{i,j}. 
\end{equation} 
The (degree) distribution of the values $\{ d_{i} \}_{i \in \nodes}$ 
is of paramount relevance for the characterization and behaviour 
of (processes over) complex networks ~\citep{NewmannBook}. 
One key parameter of this distribution is the maximum strength of similarity over a neighborhood
\begin{equation}
\label{equ_def_max_node_degree}
 d_{\rm max} \defeq \max_{i \in \mathcal{V}} d_{i} \stackrel{\eqref{equ_def_neighborhood}}{=} \max_{i \in \nodes} \sum_{j \in \mathcal{N}(i)} W_{i,j} . 
\end{equation}
 
Starting with an undirected data graph $\graph=(\nodes,\edges,\mathbf{W})$, we define its directed version $\overrightarrow{\graph}=(\nodes,\overrightarrow{\edges},\mW)$, 
by introducing edge orientations, giving each each a direction with a head (origin node) and tail (destination node) denoted $e^{+}$ and  $e^{-}$, respectively. 
Given a set of edges $\edgeset \subseteq \edges$ in the undirected graph 
$\graph$, we denote the corresponding set of directed edges in $\overrightarrow{\graph}$ as $\overrightarrow{\edgeset}$. 
For a directed graph $\overrightarrow{\graph}$, we define the \emph{incidence matrix} 
$\mD \!\in\! \mathbb{R}^{|\edges| \times |\nodes|}$ ~\citep{SharpnackJMLR2012}
\begin{equation}
D_{e,i} = \begin{cases}  W_{e} & \mbox{ if } i = e^{+}  \\ 
				    - W_{e} & \mbox{ if } i = e^{-}  \\ 
				    0 &  \mbox{ else.}  \label{equ_def_incidence_mtx}
				    \end{cases}
\end{equation} 
\begin{figure}
\begin{center}
\includegraphics[width=.6\columnwidth,angle=0]{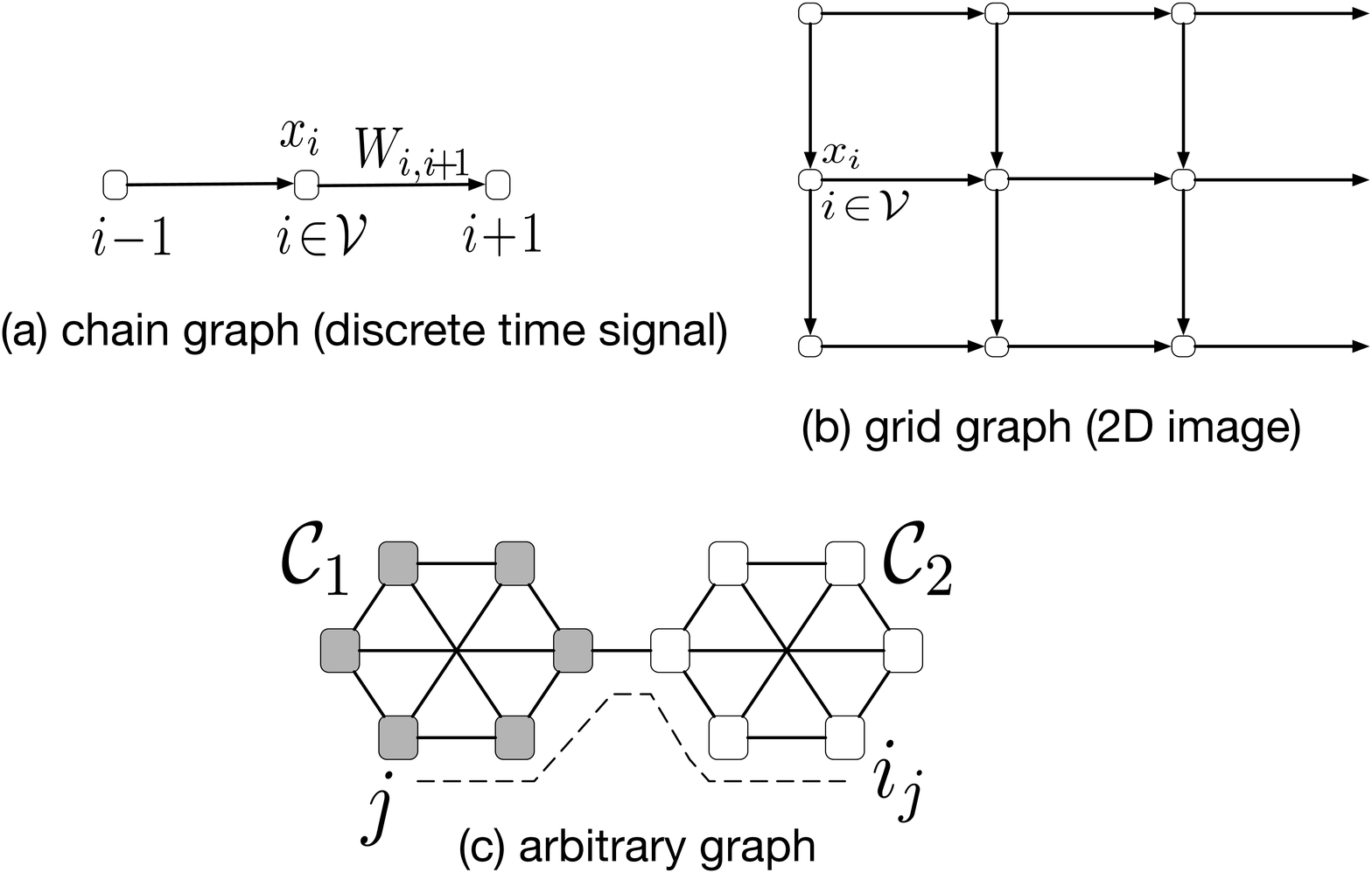}
\end{center}
\caption{\label{fig_graph_signals}Different types of datasets represented by 
graph signals defined over (a) a chain graph (discrete time signals), 
(b) grid graph (2D-images) and (c) a general graph (two cliques connected by a single edge).}
\end{figure}
Moreover, we define the oriented neighbourhoods of a node $i\in \nodes$ as $\mathcal{N}_{+}(i) \defeq \{ j \in \nodes: e=\{i,j\} \in \edges \mbox{, and } e^{+}=i \}$ and 
$\mathcal{N}_{-}(i) \defeq \{ j \in \nodes: e=\{i,j\} \in \edges \mbox{, and } e^{-}=i \}$.
If we number the nodes and orient the edges in the chain graph in Fig.\ \ref{fig_graph_signals}-(a) from left to right, 
its weighted incidence matrix would be 
\begin{equation}
\mathbf{D} = \begin{pmatrix} W_{1,2} & -W_{1,2} & 0  \\  0 & W_{2,3} & -W_{2,3} \end{pmatrix}. \nonumber
\end{equation} 

In many machine learning applications, we assign to each data point $z_{i} \in \dataset$ an associated label $x_{i} \in \mathbb{R}$, which we 
denote as an output value.\footnote{We 
highlight that the term ``label'' is typically reserved for discrete-valued or categorical output variables $x_{i}$ 
~\citep{BishopBook}. Since we can always represent the values of categorical output variables by real numbers, 
we will formulate our learning method for real-valued output variables $x_{i} \in \mathbb{R}$. 
In particular, the learning method summarized in Algorithm \ref{alg_sparse_label_propagation_centralized}, 
which is based on minimizing the total variation of real-valued labels, can also be used for 
classification by suitably quantizing the predicted output values. Extensions 
to other loss functions, more suitable to characterize the empirical 
error for discrete-valued or categorical labels, will be a focus of future work.} 
Consider, e.g., a dataset obtained from social network user profiles where each data point $z_{i}$ is constituted 
by all profile features. Here, the label $x_{i}$ might represent the preference of user $z_{i}$ to buy 
a certain product. 

We will represent the (mostly unknown) labels $x_{i}$ for the data points $z_{i} \in \dataset$ 
conveniently as a graph signal $\vx \!\in\! \mathbb{R}^{\nodes}$ defined over the data graph $\graph$. 
For a given data point $z_{i} \in \nodes$, represent by node $i\in \nodes$ in the data graph, 
the graph signal value $x[i]$ is given by the label $x_{i}$ associated with $z_{i}$. 
In what follows, we assume that for a small subset $\samplingset \subset \dataset$ of data points, which we refer to as the sampling set, 
we are provided initial labels $\{ x_{i} \}_{i \in \samplingset}$. With a slight abuse of notation, 
we will refer by $\samplingset$ also to the subset of nodes $i\in \nodes$ representing labelled data points $z_{i} \in \samplingset$.  
In applications where acquiring labels is costly, we would like to keep the sampling set 
as small as possible, i.e., we are mainly interested in the regime where $|\samplingset| \ll \signalsize$.

In order to learn the labels $\vx \in \graphsigs$ from a limited number of 
initial labels $\{ x_{i} \}_{i \in \samplingset}$, we rely on an informal smoothness assumption 
behind many (semi-) supervised learning methods ~\citep{SemiSupervisedBook}: 

\noindent
{\bf Informal Smoothness Assumption.}
{\it Consider a graph signal $\vx \in \mathbb{R}^{\nodes}$ whose signal values are the (unknown) labels $x_{i}$ of the 
data points $z_{i} \in \dataset$. The signal values $x[i]$, $x[j]$ at nodes $i,j \in \nodes$ within a well-connected subset of 
nodes in the data graph are similar, i.e., $x[i] \approx x[j]$ with respect to some metric on the label space $\mathbb{R}$.}

In order to make this informal smoothness assumption precise, 
we propose to measure the smoothness of a graph signal 
$\vx \in \graphsigs$ via its total variation (TV) 
\begin{equation} 
\label{equ_def_TV}
\| \vx \|_{\rm TV} \defeq \sum_{\{i,j\} \in \edges} W_{i,j}  | x[j]\!-\!x[i]|.
\end{equation} 
The use of (variants of) TV is prevalent in network clustering and image processing ~\citep{SharpnackJMLR2012,TrendGraph,NetworkLasso}. 
A generalization of the TV for vector-valued graph signals is minimized in ~\citep{NetworkLasso} in order to 
learn improved classifiers (e.g., support vector machines). For the particular grid graph structure 
underlying image data, many efficient denoising techniques are based on minimizing TV ~\citep{pock_chambolle_2016,pock_chambolle}. 

Some well-known examples of smooth graph signals are low-pass 
signals in digital signal processing where time samples at 
adjacent time instants are strongly correlated for sufficiently 
high sampling rate (cf.\ Fig.\ \ref{fig_graph_signals}-(a)). 
Another example are 2D signals (modelled as grid graphs) representing natural 
images where close-by pixels tend to be coloured similarly 
(cf.\ Fig.\ \ref{fig_graph_signals}-(b)).

Les us now introduce a very simple but useful model for smooth graph signals $\vx \in \graphsigs$ having small TV $\| \vx \|_{\rm TV}$. 
In particular, consider graph signals of the form   
\begin{equation}
\label{equ_def_clustered_signal_model}
\xsigval{i} = \sum_{l=1}^{|\partition|} a_{l} \mathcal{I}_{\cluster_{l}}[i] \quad\quad \mbox{with } \mathcal{I}_{\cluster_{l}}[i] \defeq \begin{cases} 1 \mbox{ for } i \in \cluster_{l} \\ 0 \mbox{ else.}  \end{cases}
\end{equation} 
Here, we used a fixed partition $\partition = \{ \cluster_{1},\ldots,\cluster_{|\partition|} \}$ of the entire data 
graph $\graph$ into disjoint clusters $\cluster_{l} \subseteq \nodes$. 
The boundary $\boundary$ of a partition $\partition$ is constituted by all edges connecting 
nodes from different clusters, i.e., 
\begin{equation}
\label{equ_def_boundary}
\boundary \defeq \{ \{i,j\} \in \edges \mbox{ with  } i \in \cluster_{a} \mbox{ and } j \in \cluster_{b} \neq \cluster_{a} \}.
\end{equation} 
We can bound the TV of a clustered graph signal of the form \eqref{equ_def_clustered_signal_model}, 
as can be verified easily, by 
\begin{equation}
\label{equ_bound_TV_norm_clustered}
\| \xsig \|_{\rm TV} = 2 \max_{l\in\{1,\ldots,|\partition|\}} |a_{l}|  \sum_{\{i,j\} \in \boundary} W_{i,j}. 
\end{equation} 
While there is now unique formal definition of when to call a subset of nodes a ``cluster'', we typically 
refer by a cluster to a subset of nodes which is weakly connected to nodes outside the cluster ~\citep{Fortunato2009}. 
For weakly interconnected clusters $\cluster_{l}$, the partition $\partition = \{ \cluster_{1},\ldots,\cluster_{|\partition|} \}$ 
has a small boundary $\boundary$ such that graph signals clustered according to $\partition$ 
(cf.\ \eqref{equ_def_clustered_signal_model}) have small TV due to \eqref{equ_bound_TV_norm_clustered}. 

The signal model \eqref{equ_def_clustered_signal_model}, which also has been 
studied in ~\citep{SharpnackJMLR2012,TrendGraph}, is closely related to the stochastic 
block model (SBM) ~\citep{Mossel2012}. Indeed, the SBM corresponds to graph signals of the form  
\eqref{equ_def_clustered_signal_model} with coefficients $a_{l}$ being unique 
for each cluster $\cluster_{l}$, i.e., $a_{l} \neq a_{l'}$ for $\cluster_{l} \neq \cluster_{l'}$. 
Moreover, the SBM provides a generative (stochastic) model 
for the edges within and between  the clusters $\cluster_{l}$. 

The incidence matrix $\mD$ (cf.\ \eqref{equ_def_incidence_mtx}) of the oriented data 
graph $\overrightarrow{\graph}$ allows us to represent $\|\vx\|_{\rm TV}$ by   
\begin{equation}
\label{equ_repr_ell_1_TV}
\| \vx \|_{\rm TV}  = \| \mathbf{D} \vx \|_{1}. 
\end{equation} 
We learn the labels $\hat{x}[i]$ for the entire dataset 
$\mathcal{D}=\{z_{i}\}_{i=1}^{N}$ via minimizing $\| \tilde{\vx} \|_{\rm TV}$ among 
all labellings $\tilde{\vx} \in \graphsigs$ which are consistent with the initial labels $\{ x_{i} \}_{i \in \samplingset}$, i.e., 
\begin{align} 
\hat{\vx} & \in \argmin_{\tilde{\vx} \in \graphsigs} \sum_{\{i,j\} \in \edges} W_{i,j}  | \tilde{x}[j]\!-\!\tilde{x}[i]|
\quad \mbox{s.t.} \quad \tilde{x}[i]\!=\!x[i]  \mbox{ for all } i \in \samplingset \nonumber \\
& \stackrel{\eqref{equ_repr_ell_1_TV}}{=} 
\argmin_{\tilde{\vx} \in \graphsigs} \| \mathbf{D} \tilde{\vx} \|_{1} \quad \mbox{s.t.} \quad  \tilde{x}[i]
\!=\!x[i] \mbox{ for all } i \in \samplingset.  \label{equ_min_constr}
\end{align}
The objective function of the optimization problem \eqref{equ_min_constr} is the seminorm $\|\vx\|_{\rm TV}$, 
which is a convex function.\footnote{The seminorm $\|\vx\|_{\rm TV}$ is convex since 
it is homogeneous ($\| \alpha\vx\|_{\rm TV}\!=\!|\alpha|\|\vx\|_{\rm TV}$ for $\alpha \in \mathbb{R}$) 
and satisfies the triangle inequality ($\|\vx\!+\!\vy\|_{\rm TV} \!\leq\! \|\vx\|_{\rm TV}\!+\!\|\vy\|_{\rm TV}$).
These two properties imply convexity ~\citep[Section 3.1.5]{BoydConvexBook}.}  
Since moreover the constraints in \eqref{equ_min_constr} are linear, the optimization problem \eqref{equ_min_constr} 
is a convex optimization problem ~\citep{BoydConvexBook}. Moreover, as the notation in \eqref{equ_min_constr} suggests, 
there typically exist several solutions to this optimization problem. The optimization method we will use for solving \eqref{equ_min_constr} 
does not require uniqueness of the solution, i.e., it can be applied even if there are several 
different optimal labellings $\hat{\vx}$. Any labelling $\hat{\vx}$ which solves \eqref{equ_min_constr} is 
characterized by two properties: (i) it is consistent with the initial labels, i.e., $\hat{x}[i]=x_{i}$ for all $i\in \samplingset$; and (ii) 
it has minimum TV among all such labelings. 

Learning methods based on solving \eqref{equ_min_constr} will be called sparse LP (SLP) methods, due to 
the similarity of \eqref{equ_min_constr} and the optimization problem underlying ordinary LP ~\citep[Chap 11.3.4.]{SemiSupervisedBook}: 
\begin{equation}
\hat{\vx}_{\rm LP}  \in \argmin_{\tilde{\vx} \in \graphsigs} \sum_{\{i,j\} \in \edges} W_{i,j} (\tilde{x}[i]-\tilde{x}[j])^2  \quad \mbox{s.t.} \quad  \tilde{x}[i]
\!=\!x_{i} \mbox{ for all } i \in \samplingset.  \label{equ_LP_problem}
\end{equation} 
We highlight the fact that problem \eqref{equ_LP_problem} amounts to minimizing the squared $\ell_{2}$ norm, while the 
SLP problem \eqref{equ_min_constr} aims to minimize the $\ell_{1}$ norm of the weighted graph signal differences $x[i]\!-\!x[j]$ 
over all edges $\{i,j\} \in \edges$ in the data graph $\graph$. The use of different norms results in significantly different behaviour of 
LP (aiming at solving \eqref{equ_LP_problem}) compared to the proposed SLP (aiming at solving \eqref{equ_min_constr}).
In particular, since SLP is based on the $\ell_{1}$ norm of signal differences, it is able to accurately learn graph signals 
which might vary abruptly over few edges, e.g., piecewise constant discrete time signals. In contrast, LP 
tends to smooth out such abrupt signal variations (cf.\ Section \ref{experimental_results}). 

The SLP problem \eqref{equ_min_constr} is closely related to the recently proposed network Lasso ~\citep{NetworkLasso,WhenIsNLASSO}
\begin{equation}
\hat{\vx}_{\rm nLasso} \in  \argmin_{\tilde{\vx} \in \graphsigs} \sum_{i \in \samplingset}  (\tilde{x}[i]-x_{i})^2  + \lambda \| \tilde{\vx}\|_{\rm TV} \label{equ_nLasso}.
\end{equation} 
The tuning parameter $\lambda$ trades small empirical label fitting error $\sum_{i \in \samplingset}  (\tilde{x}[i]\!-\!x_{i})^2$ against 
smoothness $\|\hat{\vx}_{\rm nLasso}\|_{\rm TV}$ of the learned graph signal $\hat{\vx}_{\rm nLasso}$. 
Choosing a large value of $\lambda$ enforces smoothness of the learned graph signal, while using a small value for $\lambda$ 
puts more emphasis on the empirical error. In particular, as $\lambda \rightarrow 0$ the solutions of the network 
Lasso problem \eqref{equ_nLasso} become more similar to the solutions of the SLP problem \eqref{equ_min_constr}. 

Now that we have formulated the (transductive) semi-supervised learning problem as the 
(nonsmooth) convex optimization problem \eqref{equ_min_constr}, we can apply efficient convex optimization 
methods to obtain learning algorithms. However, it is important to note that the 
formulation \eqref{equ_min_constr} requires knowledge of the data graph $\graph$. In particular, 
the objective function $\| \vx \|_{\rm TV}= \|\mathbf{D} \vx \|_{1}$ depends on the 
graph structure via the incidence matrix $\mathbf{D}$ (cf.\ \eqref{equ_def_incidence_mtx}). 
Many applications naturally specify the data graph structure, e.g., proximal dependency in chain graphs 
of autoregressive time series models or grid graphs of Markov random fields for image processing. 
However, in some applications the underlying 
graph model has to be learned from the data, e.g., using probabilistic 
graphical models ~\citep{CSGraphSelJournal,JungGaphLassoSPL,JuHeck2014}. 
In this paper, we assume the data graph $\graph$ is specified. 

For a dataset $\mathcal{D}$ whose data graph $\graph$ contains several isolated components ~\citep{NewmannBook} 
that are not connected with each other, the learning problem \eqref{equ_min_constr} decomposes into independent 
subproblems, i.e., one instance of \eqref{equ_min_constr} 
for each of the components. Therefore, without loss of generality, in the sequel we consider 
datasets whose data graph $\graph$ is connected. 

\section{Sparse Label Propagation for Semi-Supervised Learning}
\label{sec_spl_Alg}

The learning problem \eqref{equ_min_constr} is a non-smooth convex optimization problem, which 
precludes the use of standard gradient methods such as (accelerated) gradient descent. 
However, the objective function and the constraint set have simple structure 
when considered independently from each other. This structure suggests the use of efficient 
proximal methods ~\citep{ProximalMethods} for solving \eqref{equ_min_constr}. In particular, 
we apply a preconditioned variant of the primal-dual method introduced by ~\citep{pock_chambolle}
to solve \eqref{equ_min_constr}.  

In order to apply the primal-dual method of ~\citep{pock_chambolle}, we have to reformulate \eqref{equ_min_constr}
as an equivalent unconstrained convex optimization problem 
\begin{align}
\label{equ_min_constr_unconstr}
\hat{\vx} & \!\in\! \argmin_{\vx \in \mathbb{R}^{\nodes}} f(\vx) \defeq g(\mD \vx) + h(\vx) \mbox{, with } g(\vy) \defeq \| \vy\|_{1} \mbox{ and } h(\vx) \defeq \delta_{\mathcal{Q}}(\vx).  
\end{align}
Here, the indicator function $\delta_{\mathcal{Q}}(\vx)$ of a convex set $\mathcal{Q}$ is defined as  
\begin{equation}
\delta_{\mathcal{Q}}(\vx) = \begin{cases} \infty  \mbox{ if } \vx \notin \mathcal{Q} \\ 0 \mbox{ if } \vx \in \mathcal{Q} \end{cases}, 
\end{equation}
or, equivalently, via its epigraph 
\begin{equation}
{\rm epi} \delta_{\mathcal{Q}} \defeq \{ (\vx,t):  \vx \in \mathcal{Q}, t \in \mathbb{R}_{+} \} \subseteq \graphsigs \times \mathbb{R}. \nonumber
\end{equation}
The solutions $\hat{\vx}$ of \eqref{equ_min_constr_unconstr} are characterized by the zero-subgradient condition 
\begin{equation}
\label{equ_zero_subgradient}
\mathbf{0} \in \partial f(\hat{\vx}). 
\end{equation} 
A proximal method is obtained by applying fixed-point iterations for some operator $\mathcal{P}$ 
such that $\hat{\vx}$ are the fixed-points of $\mathcal{P}$, i.e.,  
\begin{equation}
\label{equ_zero_subgradient_equ}
\mathbf{0} \in \partial f(\hat{\vx}) \mbox{ if and only if } \hat{\vx} = \mathcal{P} \hat{\vx}. 
\end{equation} 
The operator $\mathcal{P}$ depends on the objective function $f(\vx)$ but is not uniquely defined, i.e., there are 
different choices for $\mathcal{P}$ such that \eqref{equ_zero_subgradient_equ} will define solutions of \eqref{equ_min_constr_unconstr}. 

One extremely powerful approach to choosing the operator $\mathcal{P}$ in \eqref{equ_zero_subgradient_equ}, which underlies many 
primal-dual optimization methods \citep{pock_chambolle,PrecPockChambolle2011}, is based on convex duality ~\citep{RockafellarBook}. 
In particular, according to ~\citep[Thm.\ 31.3]{RockafellarBook}, a graph signal $\hat{\vx} \in \graphsigs$ is solution to the problem \eqref{equ_min_constr_unconstr} 
if and only if there exists a (dual vector) $\hat{\vy} \in \mathbb{R}^{\edges}$ such that 
\begin{equation}
\label{equ_two_coupled_conditions}
\mD \hat{\vx} \in \partial g^{*}(\hat{\vy}) \mbox{ , and } -(\mD^{T} \hat{\vy}) \in \partial h(\hat{\vx}). 
\end{equation} 
Here, $g^{*}(\hat{\vy})$ denotes the convex conjugate of the convex function $g(\vy)$, which is defined as ~\citep{BoydConvexBook}
\begin{equation}
g^{*}(\hat{\vy}) \defeq \sup_{\vy \in \mathbb{R}^{\edges}} \vy^{T}\hat{\vy}- g( \vy).  
\end{equation} 
The dual vector $\hat{\vy} \in \mathbb{R}^{\edges}$ represents a signal defined over the edges $\edges$ in 
the data graph $\graph$, with the entry $\hat{y}[e]$ being the signal value associated with the particular edge $e \in \edges$. 
The two coupled conditions in \eqref{equ_two_coupled_conditions} can be rewritten, rather trivially, as 
\begin{equation}
\label{equ_manipulated_coupled_conditions}
2 {\bm \Lambda}  \mD \hat{\vx} +\hat{\vy} \in {\bm \Lambda} \partial g^{*}(\hat{\vy})+ {\bm \Lambda} \mD\hat{\vx}+\hat{\vy} 
\mbox{ , and } \hat{\vx} - {\bm \Gamma} \mD^{T} \hat{\mY} \in \hat{\vx} + {\bm \Gamma} \partial h(\hat{\vx}), 
\end{equation}
with the invertible diagonal matrices 
\begin{equation} 
\label{equ_def_scaling_matrices}
{\bf \Lambda} \defeq {\rm diag} \{ \lambda_{\{i,j\}} \defeq1/(2W_{i,j}) \}_{\{i,j\} \in \edges} \in \mathbb{R}^{\edges \times \edges} \mbox{ and } {\bf \Gamma} \defeq {\rm diag} \{ \gamma_{i} \defeq 1/d_{i} \}_{i \in \nodes} \in \mathbb{R}^{\nodes \times \nodes}.
\end{equation}

Since the resolvent operators $(\mathbf{I} + {\bm \Gamma} \partial h)^{-1}$ and $(\mathbf{I}+ {\bm \Lambda}\partial g^{*})^{-1}$ 
are well-defined for the particular 
choices of $g(\vy)$ and $h(\vx)$ in \eqref{equ_min_constr_unconstr} (cf.\ ~\citep{ProximalMethods}), 
the condition \eqref{equ_manipulated_coupled_conditions} for $\hat{\vx}$, $\hat{\vy}$ to be primal and dual 
optimal is equivalent to  
\begin{align}
\label{equ_condition_fix_point}
\hat{\vx} &= (\mathbf{I} + {\bm \Gamma} \partial h)^{-1} (\hat{\vx} - {\bm \Gamma} \mD^{T} \hat{\vy}) \\ 
\hat{\vy} - 2(\mathbf{I} + {\bf \Lambda}  \partial g^{*})^{-1}   {\bf \Lambda}  \mD \hat{\vx} & = (\mathbf{I} +{\bf \Lambda}  \partial g^{*})^{-1}(\hat{\vy} -  {\bf \Lambda}\mD\hat{\vx}) \nonumber
\end{align}  
The characterization \eqref{equ_condition_fix_point} for $\hat{\vx} \in \graphsigs$ to be a solution 
to \eqref{equ_min_constr} lends naturally to the following fixed-point iterations for finding $\hat{\vx}$ (cf.\ ~\citep{PrecPockChambolle2011}) 
\begin{align}
\hat{\vx}^{(k+1)} &\defeq (\mathbf{I} + {\bm \Gamma} \partial h)^{-1} (\hat{\vx}^{(k)} - {\bm \Gamma} \mD^{T} \hat{\vy}^{(k)})  \nonumber \\ 
\hat{\vy}^{(k+1)} &\defeq (\mathbf{I} + {\bf \Lambda}  \partial g^{*})^{-1}(\hat{\vy}^{(k)} +  {\bf \Lambda}  \mD(2\hat{\vx}^{(k+1)}- \hat{\vx}^{(k)})). \label{equ_fixed_point_iterations}
\end{align}  
The specific choice \eqref{equ_def_scaling_matrices} for the scaling matrices ${\bf \Gamma}$ and ${\bf \Lambda}$ in 
\eqref{equ_fixed_point_iterations} can be shown to satisfy 
\begin{equation}
\label{equ_convergence_condition}
\| {\bf \Gamma}^{1/2} \mD^{T} {\bf \Lambda}^{1/2} \|_{2} < 1,
\end{equation}
which guarantees, in turn, convergence of the iterations \eqref{equ_fixed_point_iterations}  (cf.\ ~\citep{PrecPockChambolle2011}). 

The fixed-point iterations \eqref{equ_fixed_point_iterations} are similar to those considered in ~\citep[Sec. 6.2.]{pock_chambolle}. 
In particular, the iterations \eqref{equ_fixed_point_iterations} involve the resolvent operators ~\citep[Sec. 1.1.]{PrecPockChambolle2011}
\begin{align}
(\mathbf{I} + {\bm \Gamma} \partial h)^{-1} (\vx) & = \argmin\limits_{\vz \in \graphsigs} h(\vz) + (1/2) (\vx\!-\!\vz)^{T} {\bm \Gamma}^{-1}(\vx\!-\!\vz) \mbox{, and }  \nonumber \\ 
(\mathbf{I} + {\bf \Lambda} \partial g)^{-1} (\vy) & = \argmin\limits_{\vz \in \edgesigs} g(\vz) +  (1/2) (\vy\!-\!\vz)^{T} {\bm \Lambda}^{-1}(\vy\!-\!\vz). \label{equ_iterations_number_112}
\end{align}
Based on \eqref{equ_iterations_number_112}, closely following the arguments used in ~\citep[Sec. 6.2.]{pock_chambolle}, we obtain 
simple closed-form expressions for the updates in \eqref{equ_fixed_point_iterations} yielding, in turn, 
Algorithm \ref{alg_sparse_label_propagation_centralized}. 

There are various options for the stopping criterion in Algorithm \ref{alg_sparse_label_propagation_centralized}, 
e.g., using a fixed number of iterations or testing for sufficient decrease of the objective 
function (cf.\ ~\citep{becker2011nesta}). When using a fixed number of iterations, it is helpful 
to have a characterization of the convergence rate of Algorithm \ref{alg_sparse_label_propagation_centralized}. 
Such a characterization can be obtained from ~\citep[Lemma 2]{PrecPockChambolle2011}, which yields 
the following upper bound on the sub-optimality in terms of the TV $\| \hat{\vx} \|_{\rm TV} $ (cf.\ \eqref{equ_min_constr}) 
achieved by the output $\hat{\vx}^{(k)}$ after $k$ iterations of Algorithm \ref{alg_sparse_label_propagation_centralized} : 
\begin{equation} 
\label{equ_conv_rate_inv_prop_k}
\| \hat{\vx}^{(k)} \|_{\rm TV} - \| \hat{\vx} \|_{\rm TV} \leq c_{1}/k.
\end{equation} 
The constant $c_{1}$ may depend on the underlying 
graph signal $\vx$. We highlight that this convergence rate, i.e., $\propto 1/k$ with $k$ being the number of iterations,  
is optimal for first-order gradient methods, i.e., which only have access  to the (sub-)gradient of the 
objective function in each iteration, applied to problems of the type \eqref{equ_min_constr} (cf.\ ~\citep{NemYudFOM}). 
\begin{algorithm}[h]
\caption{Sparse Label Propagation}{}
\begin{algorithmic}[1]
\renewcommand{\algorithmicrequire}{\textbf{Input:}}
\renewcommand{\algorithmicensure}{\textbf{Output:}}
\Require  oriented data graph $\overrightarrow{\graph}$ with incidence matrix $\mD\!\in\! \mathbb{R}^{\overrightarrow{\edges} \times \nodes}$ 
(cf.\ \eqref{equ_def_incidence_mtx}), sampling set $\samplingset$, initial labels $\{ x_{i} \}_{i \in \samplingset}$. 
\Statex\hspace{-6mm}{\bf Initialize:} $k\!\defeq\!0$, 
$\vz^{(0)}\!\defeq\!\mathbf{0}$, $\vx^{(0)}\!\defeq\!\vx_{\mathcal{M}}$, $\hat{\vx}^{(0)}\!\defeq\!0$, $\hat{\vy}^{(0)}=\mathbf{0}$, 
$\gamma_{i} \defeq 1/\sum_{j \in \mathcal{N}(i)} W_{i,j}$, $\lambda_{\{i,j\}} = 1/(2W_{i,j})$. 
\Repeat
\vspace*{2mm}
\State $\hat{\vx}^{(k+1)}  \defeq \hat{\vx}^{(k)} - {\bm \Gamma} \mD^{T} \hat{\vy}^{(k)}$ with ${\bm \Gamma}={\rm diag} \{ \gamma_{i} \}_{i \in \nodes}$
\vspace*{2mm}
\State $\hat{x}^{(k+1)}[i] \defeq x_{i}$  for all sampled nodes $i \in \samplingset$
\vspace*{2mm}
\State  $\tilde{\vx}  \defeq 2 \hat{\vx}^{(k+1)} - \hat{\vx}^{(k)}$
\vspace*{2mm}
\State $\hat{\vy}^{(k+1)}  \defeq \hat{\vy}^{(k)} + {\bf \Lambda}  \mD  \tilde{\vx}$ with ${\bf \Lambda}={\rm diag} \{ \lambda_{\{i,j\}} \}_{\{i,j\} \in \edges}$
\vspace*{2mm}
\State $y^{(k+1)}[e]  \defeq y[e] / \max\{1, |y[e]| \}$  for all edges  $e \in \overrightarrow{\edges}$
\vspace*{2mm}
\State $k \defeq k+1$ 
\vspace*{2mm}
\Until{stopping criterion is satisfied}
\vspace*{2mm}
\Ensure labels $\hat{x}_{\rm SLP}[i] \defeq \hat{x}^{(k)}[i]$ \mbox{ for all }$i \in \nodes$
\end{algorithmic}
\label{alg_sparse_label_propagation_centralized}
\end{algorithm}
Let us highlight that the SLP Algorithm \ref{alg_sparse_label_propagation_centralized} only requires the data graph but it 
does not require knowledge of the partition $\partition$ underlying the clustered signal model \eqref{equ_def_clustered_signal_model}. 
Rather, the partition $\partition$ and corresponding signal model \eqref{equ_def_clustered_signal_model} is only used for the 
performance analysis of Algorithm \ref{alg_sparse_label_propagation_centralized}. 

Below we present an implementation of Algorithm \ref{alg_sparse_label_propagation_centralized} 
as message passing over the underlying data graph $\graph$. This message passing implementation, 
summarized in Algorithm \ref{sparse_label_propagation_mp}, is obtained by implementing the application of the graph 
incidence matrix $\mD$ and its transpose $\mD^{T}$ (cf. steps $2$ and $5$ of Algorithm \ref{alg_sparse_label_propagation_centralized}) 
by local updates of the labels $\hat{x}[i]$, i.e., updates which involve only the neighbourhoods 
$\mathcal{N}(i)$, $\mathcal{N}(j)$ of all edges $\{i,j\} \in \edges$ in the data graph $\graph$.  
In particular, the execution of Algorithm \ref{sparse_label_propagation_mp} does not require to collect 
global knowledge about the entire data graph (e.g., the maximum strength $d_{\rm max}$ \eqref{equ_def_max_node_degree}) at 
some central processing unit. Indeed, if we associate each node in the data graph with a computational unit, 
the execution of Algorithm \ref{sparse_label_propagation_mp} requires each node $i \in \nodes$ only to store the 
values $\{ \hat{y}[\{i,j\}], W_{i,j} \}_{j \in \mathcal{N}(i)}$ and $\hat{x}^{(k)}[i]$.  
This characteristic allows Algorithm \ref{sparse_label_propagation_mp} to scale well for massive datasets (big data).

\begin{algorithm}[h]
\caption{Sparse Label Propagation as Message Passing}{}
\begin{algorithmic}[1]
\renewcommand{\algorithmicrequire}{\textbf{Input:}}
\renewcommand{\algorithmicensure}{\textbf{Output:}}
\Require oriented data graph $\overrightarrow{\graph}=(\nodes,\overrightarrow{\edges},\mW)$, sampling set $\samplingset$, initial labels $\{ x_{i} \}_{i \in \samplingset}$. 
\Statex\hspace{-6mm}{\bf Initialize:} $k\!\defeq\!0$, 
$\vz^{(0)}\!\defeq\!\mathbf{0}$, $\vx^{(0)}\!\defeq\!\vx_{\mathcal{M}}$, $\hat{\vx}^{(0)}\!\defeq\!0$, $\hat{\vy}^{(0)}=\mathbf{0}$, 
$\gamma_{i} \defeq 1/\sum_{j \in \mathcal{N}(i)} W_{i,j}$, $\lambda_{\{i,j\}} = 1/(2W_{i,j})$. 
\Repeat
\vspace*{2mm}
\State for all nodes $i\!\in\!\nodes$: $\tilde{x}^{(k+1)}[i]\!\defeq\!\tilde{x}^{(k)}[i]\!-\!\gamma_{i} \bigg[ \sum\limits_{j \in \mathcal{N}_{+}(i)} \hspace*{-3mm}W_{i,j} \hat{y}^{(k)}[\{i,j\}]- \hspace*{-3mm}\sum\limits_{j \in \mathcal{N}_{-}(i)} \hspace*{-3mm}W_{i,j} \hat{y}^{(k)}[\{i,j\}] \bigg]$    \label{algostep1} 
\vspace*{2mm}
\State for all nodes $i\!\in\!\samplingset$:  $\hat{x}^{(k+1)}[i] \defeq x_{i}$
\vspace*{2mm}
\State for all nodes $i \in \nodes$: $\tilde{x}[i]  \defeq 2 \hat{x}^{(k+1)}[i]  - \hat{x}^{(k)}[i]$    
\vspace*{2mm}
\State for all edges $e \in \overrightarrow{\edges}$: $\hat{y}^{(k+1)}[e]  \defeq \hat{y}^{(k)}[e] +   \lambda_{\{i,j\}}   (\tilde{x}[e^{+}] - \tilde{x}[e^{-}])$
\vspace*{2mm}
\State for all edges $e \in \overrightarrow{\edges}$: $\hat{y}^{(k+1)}[e]  \defeq \hat{y}^{(k+1)}[e] / \max\{1, |\hat{y}^{(k+1)}[e]| \}$  
\vspace*{2mm}
\State $k \defeq k+1$    
\vspace*{2mm}
\Until{stopping criterion is satisfied}
\vspace*{2mm}
\Ensure labels $\hat{x}_{\rm SLP}[i] \defeq \hat{x}^{(k)}[i]$ \mbox{ for all }$i \in \nodes$
\end{algorithmic}
\label{sparse_label_propagation_mp}
\end{algorithm}

\section{When is SLP Accurate?} 
\label{sec_main_results} 

In the previous section, we reduced the learning of labels $x[i]$ for all data points $z_{i}$ 
based on the data graph and knowledge of a few initial labels $x_{i}$, provided for $i \in \samplingset \subseteq \nodes$, 
to a convex optimization problem \eqref{equ_min_constr}. 
Applying an efficient primal-dual method ~\citep{pock_chambolle} to \eqref{equ_min_constr} 
resulted in the SLP Algorithm \ref{alg_sparse_label_propagation_centralized} and 
its message passing implementation Algorithm \ref{sparse_label_propagation_mp}. 

For SLP methods, based on solving \eqref{equ_min_constr}, to be accurate, i.e., to deliver labels 
$\hat{x}^{(k)}[i] \approx x[i]$ after a sufficient number $k$ of iterations, the solutions $\hat{\vx}$ to \eqref{equ_min_constr} have to be 
close to the true labels $x[i]$. 
In particular, for a true underlying clustered labeling $\vx$ of the form \eqref{equ_def_clustered_signal_model}, 
we will present a condition which guarantees any solution $\hat{\vx}$ to \eqref{equ_min_constr} 
is close to $\vx$. 
To this end, we first introduce the following definition. 
\begin{definition}
\label{def_sampling_set_resolves}
Consider a partition $\partition=  \{ \cluster_{1},\cluster_{2},\ldots,\cluster_{|\partition|} \}$ of the data graph 
$\graph$ into pairwise disjoint subsets of nodes (clusters) $\cluster_{l} \subseteq \nodes$. 
We are provided with initial labels for all data points in the sampling set $\samplingset \subseteq \nodes$. 
We say that the partition $\partition$ is resolved by a sampling set $\samplingset$ 
if for each boundary edge $\{i,j\} \in \boundary$, we can find 
two sampled nodes $m,n \in \samplingset$ such that $W_{m,i} \geq 2 W_{i,j}$ and $W_{n,j} \geq 2 W_{i,j}$. 
\end{definition} 
The sampled nodes $m,n \in \samplingset$ considered in Definition \ref{def_sampling_set_resolves} necessarily have to be connected to $i$ and $j$, i.e., $\{i,m\} \in \edges$ and 
$\{n,j\} \in \edges$, since $W_{m,i} \geq 2 W_{i,j} > 0$ (cf.\ \eqref{equ_edge_set_support_weights}) and, similarly, $W_{n,j} > 0$. 
It is important to note that Definition \ref{def_sampling_set_resolves} involves both the sampling set $\samplingset$ and the partition $\partition$. 
In Fig.\ \ref{fig_cluster}, we depict a sampling set $\samplingset \subseteq \nodes$ which resolves the partition $\partition$.  
\begin{figure}
\begin{center}
\includegraphics[width=.6\columnwidth,angle=0]{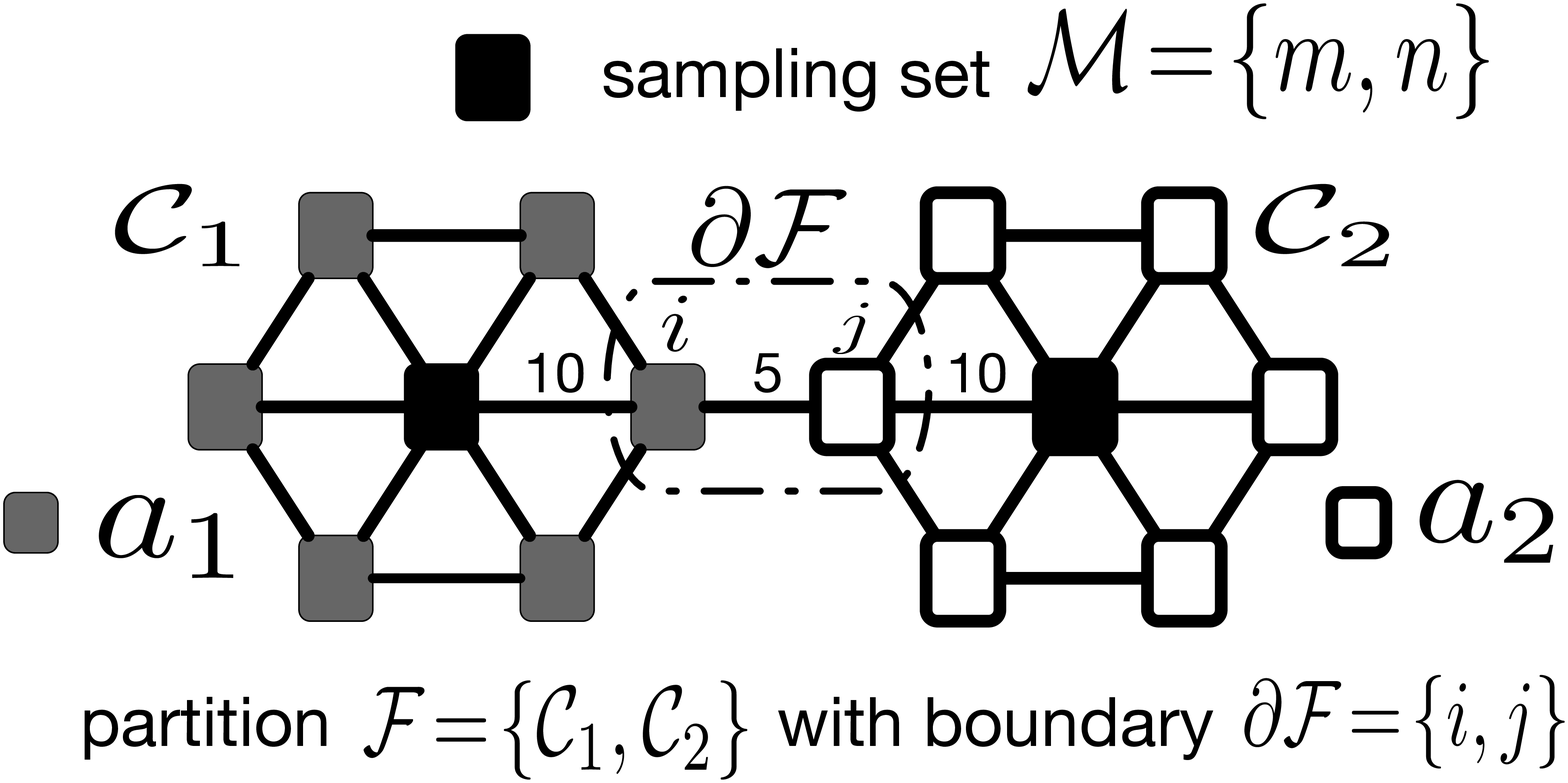}
\end{center}
\caption{\label{fig_cluster} Data graph $\graph$ composed of two clusters $\cluster_{1}$, $\cluster_{2}$. 
Initial labels $\{x_{m},x_{n}\}$ are provided for data points in the sampling set $\samplingset = \{m,n\}$.}
\end{figure}

\begin{theorem} 
\label{main_thm_exact_sparse}
Consider dataset $\dataset$ with data graph $\graph$ whose nodes represent data points with associated 
labels $\xsigval{i}$. The labels form a clustered graph signal $\vx \in \graphsigs$ 
according to \eqref{equ_def_clustered_signal_model} with a fixed partition $\partition=\{\cluster_{1},\ldots,\cluster_{|\partition|}\}$. 
Assume that we are provided the true labels $x_{i}$ on the sampling set $\samplingset$. If the sampling set 
$\samplingset$ resolves the partition $\partition$, 
the solution $\hat{\vx}$ to \eqref{equ_min_constr} is unique and coincides with the true labels, i.e., 
$\hat{x}[i] = x[i]$ for all $i \in \nodes$. 
\end{theorem}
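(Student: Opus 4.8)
The plan is to verify that the true clustered signal $\vx$ is a solution to \eqref{equ_min_constr}, and then to upgrade this to a uniqueness statement by exhibiting a dual certificate. For the first part, $\vx$ is trivially feasible since it agrees with the provided labels on $\samplingset$, so it suffices to show $\|\tilde{\vx}\|_{\rm TV} \geq \|\vx\|_{\rm TV}$ for every feasible $\tilde{\vx}$. The natural tool here is convex duality / a subgradient certificate for the $\ell_1$-type objective $g(\mD \vx) = \|\mD \vx\|_1$: I would look for a dual vector $\hat{\vy} \in \mathbb{R}^{\edges}$ with $\|\hat{\vy}\|_\infty \leq 1$ (so that $\mD^T \hat{\vy}$ is a subgradient of $\|\mD \cdot\|_1$ at $\vx$), satisfying $\hat{y}[\{i,j\}] = \mathrm{sign}(x[j]-x[i]) \cdot (\text{something})$ on the boundary edges $\boundary$ where $\mD\vx$ is nonzero, and such that $\mD^T \hat{\vy}$ is supported only on $\samplingset$ — the latter being exactly the condition that makes the certificate compatible with the equality constraints on $\samplingset$. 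Concretely, for each boundary edge $\{i,j\}$ I would route a unit of "flow" from $i$ to the sampled neighbor $m$ (with $W_{m,i} \geq 2W_{i,j}$) and from $j$ to the sampled neighbor $n$, so that the divergence of $\hat{\vy}$ vanishes at all unsampled nodes.

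The key computation is then to check that this constructed $\hat{\vy}$ indeed has $|\hat{y}[e]| \leq 1$ on every edge. On a boundary edge $\{i,j\}$ we need a contribution of the "correct" size relative to $W_{i,j}$; on the auxiliary edges $\{m,i\}$ and $\{n,j\}$ the condition $W_{m,i} \geq 2W_{i,j}$ (resp. $W_{n,j} \geq 2 W_{i,j}$) is precisely what keeps the normalized flow value at most $1$ — this is where the factor $2$ in Definition~\ref{def_sampling_set_resolves} gets used, and also where the diagonal scaling by $\lambda_{\{i,j\}} = 1/(2W_{i,j})$ from \eqref{equ_def_scaling_matrices} naturally appears. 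For edges that are neither boundary edges nor auxiliary edges one sets $\hat{y}[e]=0$, which is consistent because $\mD \vx$ vanishes on the interior of each cluster and one only needs $\hat{y}$ to be a genuine subgradient entry where $\mD\vx \neq 0$. I would phrase the existence of the routing abstractly: for each boundary edge pick the two guaranteed sampled nodes, define $\hat{\vy}$ edge-by-edge as the superposition of these $|\boundary|$ elementary flows, and then bound $\|\hat{\vy}\|_\infty$ by a union/worst-case argument over edges. (If boundary edges can share auxiliary edges one must check the superposition still stays bounded; I would either invoke a counting argument or, more cleanly, argue that distinct boundary edges contribute with coherent signs, or strengthen the bookkeeping.)

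Given such a strictly-feasible-in-the-right-sense certificate, optimality of $\vx$ follows from the KKT conditions \eqref{equ_two_coupled_conditions}: $-\mD^T \hat{\vy} \in \partial h(\vx)$ holds because $\mD^T\hat{\vy}$ is supported on $\samplingset$ and $h = \delta_{\mathcal{Q}}$ is the indicator of the feasible affine set, whose normal cone at any feasible point consists exactly of vectors supported on $\samplingset$; and $\mD\vx \in \partial g^*(\hat{\vy})$ holds by the complementarity built into the construction of $\hat{\vy}$ (it is aligned with $\mathrm{sign}(\mD\vx)$ on the support of $\mD\vx$). For uniqueness, I would argue that if $\hat{\vx}$ is any other optimum then $\mD\hat{\vx}$ must vanish on all edges where $\hat{y}[e]$ is strictly interior to $[-1,1]$ — in particular on all non-boundary edges — forcing $\hat{\vx}$ to be constant on each cluster $\cluster_l$; combined with the equality constraints and the resolving property (each cluster containing, or being pinned through a boundary neighbor to, a sampled node) this pins $\hat{\vx} = \vx$. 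The main obstacle I anticipate is making the flow construction and the $\ell_\infty$ bound fully rigorous when several boundary edges are incident to the same node or share auxiliary edges, and handling the uniqueness step when $|\partition|$ is large and clusters are only indirectly anchored to $\samplingset$ through their boundaries — the connectivity assumption on $\graph$ and the resolving property should together suffice, but the argument needs care.
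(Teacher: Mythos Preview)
Your dual-certificate plan is a legitimate route to the result, but it is \emph{not} the one the paper takes. The paper works entirely on the primal side via a \emph{network nullspace property}: it shows (Lemma~\ref{lem_NNSP_samplingset_suff_recovery}) that if $\samplingset$ resolves $\partition$ then every nonzero $\vu\in\Ker(\samplingset)$ satisfies $\|(\mD\vu)_{\edges\setminus\boundary}\|_{1}\ge 2\|(\mD\vu)_{\boundary}\|_{1}$, and then (Lemma~\ref{lem_NSP1}) that this inequality alone forces $\|\mD\hat{\vx}\|_{1}>\|\mD\vx\|_{1}$ for any feasible $\hat{\vx}\neq\vx$, by writing $\hat{\vx}=\vx+\vu$ and applying the triangle inequality on and off the support $\edgesupport_{x}\subseteq\boundary$. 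No flow $\hat{\vy}$ is ever constructed, and uniqueness falls out of the same strict inequality rather than from strict complementary slackness plus a separate anchoring argument.

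The trade-offs are what one expects from the nullspace-property versus dual-certificate dichotomy in compressed sensing. Your approach is more constructive and would, if completed, exhibit an explicit optimal dual variable; but it forces you to control an $\ell_\infty$ bound on the superposed flow and then to run a second argument for uniqueness --- both of which you correctly flag as delicate when boundary edges share endpoints or auxiliary edges, or when a cluster is not directly sampled. The paper's nullspace route replaces the pointwise $\ell_\infty$ control by an aggregate $\ell_1$ comparison and gets uniqueness for free, so the bookkeeping you are worried about largely disappears at the level of the main argument. (A version of your superposition concern does resurface in the last inequality of the paper's Lemma~\ref{lem_NNSP_samplingset_suff_recovery}, where distinct boundary edges may route through the same auxiliary edge; you may find it instructive to compare how that step handles --- or fails to handle --- the overlap.) Finally, the link you draw to the preconditioner $\lambda_{\{i,j\}}=1/(2W_{i,j})$ is a nice heuristic, but it plays no role in the recovery proof itself: that scaling governs convergence of the algorithm, not the geometry of the minimizers.
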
 
By Theorem \ref{main_thm_exact_sparse}, learning algorithms based on solving \eqref{equ_min_constr}, 
such as Algorithm \ref{alg_sparse_label_propagation_centralized}, can be expected to accurately learn true 
underlying labels $x[i]$ if they amount to a clustered labeling $\vx \in \graphsigs$ of the form \eqref{equ_def_clustered_signal_model}. 
We again highlight the fact that Algorithm \ref{alg_sparse_label_propagation_centralized} 
does not require knowledge of the partition $\partition=\{\mathcal{C}_{1},\ldots,\mathcal{C}_{|\partition|}\}$. 
Indeed, we could use Algorithm \ref{alg_sparse_label_propagation_centralized} for determining the 
clusters $\cluster_{l}$ if the underlying labels $x[i]$ form a clustered graph signal $x[i] = \sum_{l=1}^{|\partition|} a_{l} \mathcal{I}_{\cluster_{l}}[i]$ 
with $a_{l} \neq a_{l'}$ for different clusters $l \neq l'$.

The scope of Theorem \ref{main_thm_exact_sparse} is somewhat limited as 
it applies only to clustered graph signals of the form \eqref{equ_def_clustered_signal_model}. 
We now state a more general result which applies to any graph signal $\vx \in \graphsigs$. 
\begin{theorem} 
\label{main_thm_approx_sparse}
Consider a dataset represented by a data graph $\graph$ and true labels $x[i]$ forming a graph signal $\vx \in \graphsigs$.  
We are provided with initial labels $x_{i} = x[i]$ at nodes in the sampling set $\samplingset$. 
If the sampling set $\samplingset$ resolves $\partition=\{\cluster_{1},\ldots,\cluster_{|\partition|}$, any solution $\hat{\vx}$ 
to \eqref{equ_min_constr} satisfies
\begin{equation}
\| \hat{\vx} - \vx \|_{\rm TV} \leq 6  \min_{\{ a_{l} \}_{l=1}^{|\partition|}} \big\| \xsig- \sum_{l=1}^{|\partition|} a_{l} \mathcal{I}_{\cluster_{l}}[\cdot] \big\|_{\rm TV}. 
\end{equation} 
\end{theorem}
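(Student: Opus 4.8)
The plan is to read \eqref{equ_min_constr} as a noisy $\ell_{1}$-type recovery problem and run the standard cone / null-space argument, with the boundary $\boundary$ of the partition playing the role of the ``support'' and the equality constraints $\hat{x}[i]=x[i]$, $i\in\samplingset$, playing the role of the measurements. For an edge set $S\subseteq\edges$ write $\|\vz\|_{{\rm TV},S}\defeq\sum_{\{i,j\}\in S}W_{i,j}|z[j]-z[i]|$, so $\|\vz\|_{\rm TV}=\|\vz\|_{{\rm TV},\boundary}+\|\vz\|_{{\rm TV},\edges\setminus\boundary}$. Let $\vx^{*}=\sum_{l=1}^{|\partition|}a_{l}^{*}\mathcal{I}_{\cluster_{l}}[\cdot]$ attain the minimum on the right-hand side, put $\eta\defeq\|\vx-\vx^{*}\|_{\rm TV}$ (so the claim is $\|\hat{\vx}-\vx\|_{\rm TV}\le 6\eta$), and set $\vh\defeq\hat{\vx}-\vx$. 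Two elementary facts will be used throughout: (i) since $\vx^{*}$ is constant on every cluster, $\|\vx^{*}\|_{{\rm TV},\edges\setminus\boundary}=0$, hence $\|\vx\|_{{\rm TV},\edges\setminus\boundary}=\|\vx-\vx^{*}\|_{{\rm TV},\edges\setminus\boundary}\le\eta$; and (ii) $\vx$ is feasible for \eqref{equ_min_constr} and $\hat{\vx}$ is optimal, so $\|\hat{\vx}\|_{\rm TV}\le\|\vx\|_{\rm TV}$, while $h[i]=0$ for every $i\in\samplingset$.

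First I would derive the cone condition. Starting from $\|\vx\|_{\rm TV}\ge\|\vx+\vh\|_{\rm TV}$, split both sides over $\boundary$ and $\edges\setminus\boundary$, apply the triangle inequality on each piece, cancel the common term $\|\vx\|_{{\rm TV},\boundary}$, and invoke (i); this yields
\[ \|\vh\|_{{\rm TV},\edges\setminus\boundary}\ \le\ \|\vh\|_{{\rm TV},\boundary}+2\eta . \]
So it remains to control the boundary part of $\vh$ by its off-boundary part, and this is exactly where Definition \ref{def_sampling_set_resolves} enters. For $e=\{i,j\}\in\boundary$ pick the sampled nodes $m,n\in\samplingset$ guaranteed by the definition; since $h[m]=h[n]=0$ and $W_{m,i}\ge2W_{i,j}$, $W_{n,j}\ge2W_{i,j}$,
\[ W_{i,j}|h[i]-h[j]|\ \le\ W_{i,j}\big(|h[i]-h[m]|+|h[j]-h[n]|\big)\ \le\ \tfrac12\big(W_{m,i}|h[i]-h[m]|+W_{n,j}|h[j]-h[n]|\big). \]

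Summing the last inequality over $e\in\boundary$ turns the right-hand side into a weighted count of the ``anchor'' edges $\{m,i\}$ and $\{n,j\}$, and the step I expect to be the crux is the bookkeeping: one must argue that the resolving property lets the anchors be chosen to lie in $\edges\setminus\boundary$ and so that each edge of $\edges\setminus\boundary$ is charged at most once across all boundary edges and both roles, which gives
\[ \|\vh\|_{{\rm TV},\boundary}\ \le\ \tfrac12\,\|\vh\|_{{\rm TV},\edges\setminus\boundary}. \]
This is the delicate point — with the factor $2$ in Definition \ref{def_sampling_set_resolves} the recursion below only closes if the per-edge charge is at most one, so the counting (or, if needed, a slight strengthening of the notion ``$\samplingset$ resolves $\partition$'', e.g.\ forcing $m,n$ into the clusters of $i,j$ and the anchors to be essentially distinct) has to be done carefully; everything else is routine.

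Finally I would just combine the two displays: substituting the third into the first gives $\|\vh\|_{{\rm TV},\edges\setminus\boundary}\le\tfrac12\|\vh\|_{{\rm TV},\edges\setminus\boundary}+2\eta$, hence $\|\vh\|_{{\rm TV},\edges\setminus\boundary}\le4\eta$, then $\|\vh\|_{{\rm TV},\boundary}\le2\eta$, so $\|\hat{\vx}-\vx\|_{\rm TV}=\|\vh\|_{\rm TV}=\|\vh\|_{{\rm TV},\boundary}+\|\vh\|_{{\rm TV},\edges\setminus\boundary}\le6\eta$, which is the claim. As a sanity check, if $\vx$ already has the clustered form \eqref{equ_def_clustered_signal_model} then $\eta=0$, so $\vh$ has zero total variation and is therefore constant on the connected data graph; since it vanishes on the nonempty set $\samplingset$ it must be identically zero, recovering Theorem \ref{main_thm_exact_sparse}.
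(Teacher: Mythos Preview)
Your route is precisely the paper's: it introduces a network nullspace property (NNSP-$\edgesupport$, Definition~\ref{def_NNSP}), proves a stable-recovery lemma (Lemma~\ref{thm_approx_sparse_stable_result}) giving $\|\mD(\vx-\hat{\vx})\|_{1}\le 6\|(\mD\vx)_{\edges\setminus\edgesupport'}\|_{1}$ whenever NNSP-$\edgesupport'$ holds---this is your cone-condition display together with the final combination---and then verifies NNSP-$\boundary$ from the resolving hypothesis (Lemma~\ref{lem_NNSP_samplingset_suff_recovery}) via exactly your per-edge estimate summed over $\boundary$. The step you single out as the crux---that after summing, the anchor edges $\{m,i\},\{n,j\}$ lie in $\edges\setminus\boundary$ and each is charged at most once, so the total is bounded by $\tfrac12\|(\mD\vu)_{\edges\setminus\boundary}\|_{1}$---is exactly where the paper's proof of Lemma~\ref{lem_NNSP_samplingset_suff_recovery} ends: it simply writes that last inequality with no counting argument and no hypothesis forcing the anchors off the boundary or making them injective across $\boundary$. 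So your hesitation is well placed, but it identifies a gap shared with the paper's own argument rather than a defect peculiar to your approach.
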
 
Thus, as long as the true underlying labeling $\vx\!\in\!\graphsigs$ can be well approximated 
by a clustered graph signal of the form \eqref{equ_def_clustered_signal_model},  
with suitable coefficients $a_{l}$, any solution $\hat{\vx}$ to \eqref{equ_min_constr} 
is close (measured in TV semi-norm) to the true underlying labeling $\vx$.


\section{Numerical Experiments}
\label{experimental_results}

In order to asses the empirical performance of SLP Algorithm \ref{alg_sparse_label_propagation_centralized} 
and its message passing formulation Algorithm \ref{sparse_label_propagation_mp}, we carried out some 
illustrative numerical experiments involving both real-world and synthetic datasets.
For these experiments, we implemented the message passing formulation 
Algorithm \ref{sparse_label_propagation_mp} using the graph computation 
system \textsc{GraphX} ~\citep{xin2013graphx}, which, in turn, is a higher level abstraction for 
the the general purpose big data framework \textsc{Spark} ~\citep{zaharia2010spark}. 
The programming model of \textsc{GraphX} is based on a master/slave architecture which in 
our experiments consisted of up to $16$ identical worker nodes and one master node. 
Each of those master and worker nodes is equipped with a $64$ bit CPU, $8$ GB of 
RAM, $8$ GB of disk space with two \textsc{Spark} partitions. 

\subsection{Chain graph}  
\label{ssec_chainGraph}

The first experiment revolves around a synthetic dataset whose data graph $\graph_{1}=(\nodes,\edges,\mW)$ 
is a chain (cf.\ Fig.\ \ref{fig_graph_signals}-(a)). In particular, the graph $\graph_{1}$ contains a total of 
$\signalsize=10^{6}$ nodes $\nodes=\{1,2,\ldots,\signalsize\}$ which are connected by $\signalsize-1$ 
undirected edges $\edges =\{ \{i,i+1\} \}_{i=1,\ldots,\signalsize-1}$. The nodes of $\graph_{1}$ 
are partitioned into $\signalsize/5$ disjoint clusters $\partition=\{\cluster_{1},\ldots,\cluster_{\signalsize/5} \}$, 
each of which is constituted by $5$ consecutive nodes, i.e., 
\begin{equation}
\label{equ_cluster_chain}
\cluster_{1}=\{1,\ldots,5\}, \cluster_{2} = \{6,\ldots,10\},\ldots, \cluster_{\signalsize/5}=\{\signalsize-4,\ldots,\signalsize \}. 
\end{equation} 
The weights $W_{i,j}$ for the edges $\{i,j\} \in \edges$ are chosen according to the partitioning as 
\begin{equation}
\nonumber
W_{i,j} = \begin{cases} 2 & \mbox{ if } i,j \in \cluster_{l} \mbox{ for some } l \in \{1,\ldots,\signalsize/5\} \\  1 & \mbox{ else. } \end{cases}
\end{equation}

Given the data graph $\graph_{1}$, we generated labels $x[i]$ forming a clustered graph signal of the form \eqref{equ_def_clustered_signal_model} with 
clusters given by \eqref{equ_cluster_chain} and alternating coefficients $a_{l} \in \{1,5\}$, i.e., $a_{1}=1,a_{2}=5,a_{3}=1$ and so on. 
The labels $x[i]$ should then be learned solely based on the knowledge of their precise values on the sampling set $\samplingset$ 
which contains exactly one node from each cluster $\cluster_{l}$. The size of the sampling set is therefore $|\samplingset| = \signalsize/5$, i.e., 
it amounts to $20 \%$ of the entire data graph $\graph_{1}$. 

The learning accuracy achieve by $k$ iterations of SLP Algorithm \ref{sparse_label_propagation_mp} 
and ordinary LP ~\citep{Zhu02learningfrom} (based on \eqref{equ_LP_problem}) is measured by 
the normalized MSE (NMSE) $\varepsilon \defeq \| \hat{\vx}\!-\!\tilde{\vx}^{(k)} \|^{2}_{2} / \| \tilde{\vx}^{(k)} \|^{2}_{2}$. 
We run both methods for a fixed number of $200$ iterations and using the same sampling 
set $\samplingset$ and initial labels $x_{i} = \tilde{x}[i]$ for $i \in \samplingset$. 

In Figure \ref{fig_lpvslp}, we depict the true labels $\tilde{x}[i]$ as well as the 
labels delived by SLP and LP, denoted $\hat{x}_{\rm SLP}[i]$ and $\hat{x}_{\rm LP}[i]$, respectively. 
We observe that ordinary LP tends to smooth out the true labels $\tilde{x}[i]$ by pushing the learned labels 
$\hat{x}_{\rm LP}[i]$ for $i \notin \samplingset$ towards the weighted average of neighbours labels. 
In contrast, the labels $\hat{x}_{\rm SLP}[i]$ learned by SLP accurately resemble the true labels $\tilde{x}[i]$. 
The NMSE achieved (after $200$ iterations) by SLP is $\varepsilon_{\rm SLP}= 4.3 \cdot 10^{-3}$ 
while ordinary LP incurs an NMSE of $\varepsilon_{\rm LP}$ of $102.5\cdot 10^{-3}$. 
We also depict the evolution of the NMSE $\varepsilon_{\rm SLP}$ and $\varepsilon_{\rm LP}$ as a 
function of iteration number $k$ is depicted in Figure \ref{fig_mses}.  

\begin{figure}[!h]
  \centering
  \hspace*{0em}\includegraphics[width=.6\linewidth]{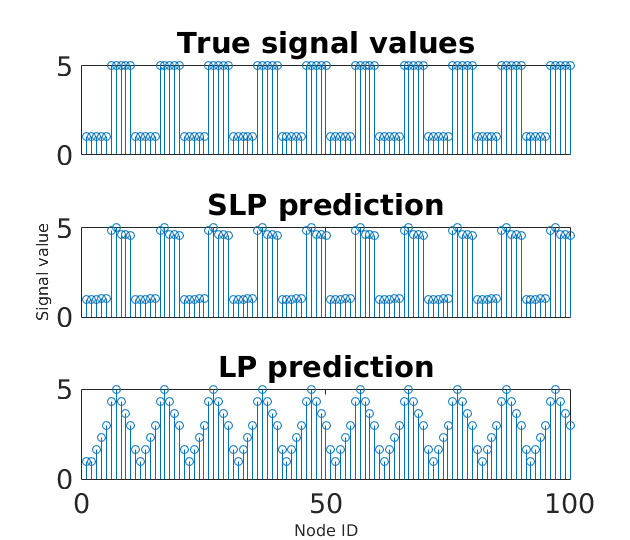}
  \caption{True labels $x[i]$ and labels learned by SLP and LP applied to the data graph $\graph_{1}$.}
  \label{fig_lpvslp}
\end{figure}

\begin{figure}[!h]
  \centering
  \hspace*{0em}\includegraphics[width=.6\linewidth]{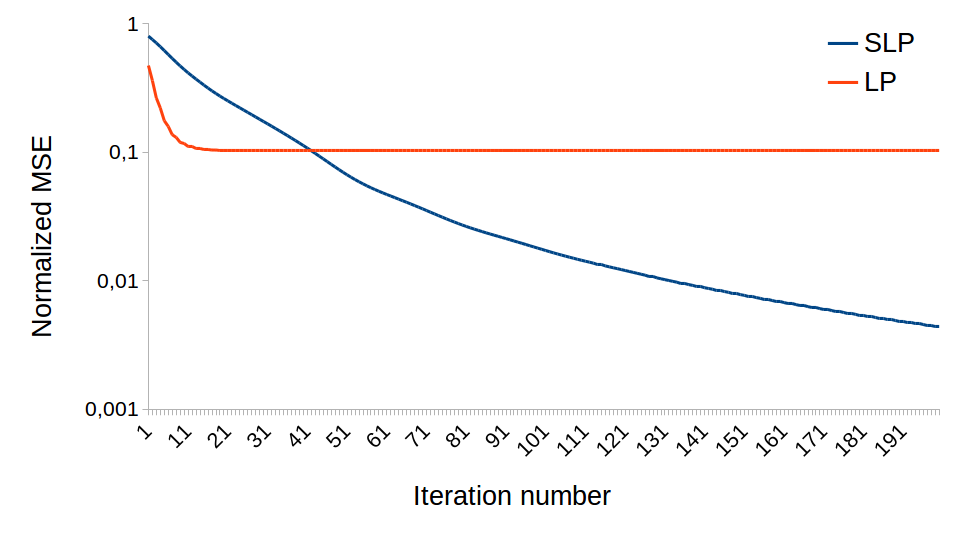}
  \caption{Dependency of NMSE achieved by SLP and LP on number of iterations, when applied to the data graph $\graph_{1}$.}
  \label{fig_mses}
\end{figure}

\subsection{LFR graph}
For our second experiment we generate a data graph $\graph_{2}=(\nodes,\edges,\mathbf{W})$ using 
the Lancichinetti-Fortunato-Radicchi (LFR) model, which is popular for benchmarking of network algorithms ~\citep{PhysRevE.78.046110}. 
The LFR model aims at mimicking the properties of real world networks 
(e.g., the internet or social networks ~\citep{NewmannBook}). In particular, 
those networks have a power law degree-distribution and exhibit a grouping of nodes 
into communities or clusters which are only weakly connected mutually. 

The data graph $\graph_{2}$ contains $\signalsize\!=\!30$ nodes $\nodes=\{1,2,\ldots,\signalsize\}$ which are partitioned into 
four clusters $\partition=\{\cluster_{1},\cluster_{2},\cluster_{3},\cluster_{4}\}$.  
The nodes are labeled according to cluster membership, 
i.e., $x[i]=1$ for all $i \in \cluster_{1}$, $x[i]=2$ for all $i \in \cluster_{2}$ and so on.  Thus, the labels induce a clustered 
graph signal of the form \eqref{equ_def_clustered_signal_model}. The graph nodes are connected by $|\edges| = 156$ undirected edges with 
weights $W_{i,j} \in [1,2]$ for all $\{i,j\} \in \edges$. 
In Figure \ref{fig_signal}, we depict the data graph $\graph_{2}$ along with the label values $x[i]$. 

\begin{figure}[!h]
\centering
\hspace*{0em}\includegraphics[width=.7\linewidth]{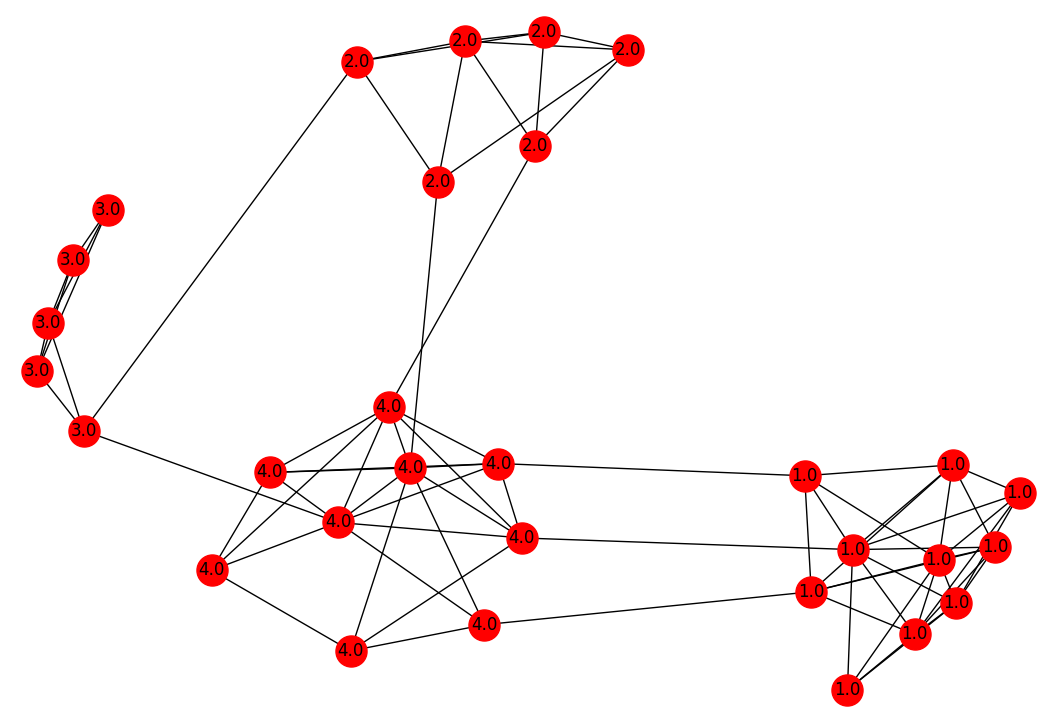}
\caption{The data graph $\graph_{2}$ conforming to the LFR model. We also indicated the labels $x[i]$ associated with nodes representing data points.}
\label{fig_signal}
\end{figure}

We then selected randomly $9$ nodes of $\graph_{2}$ as the sampling set for which the 
labels $x[i]$ are provided to SLP Algorithm \ref{sparse_label_propagation_mp} and ordinary LP, which 
both are run for a fixed number of $100$ iterations. 
The learned labels delivered by SLP and LP, along with the true labels $x[i]$ are shown In Figure \ref{fig_lpvslp_lfr}. 
Obviously the labels $\hat{x}_{\rm SLP}[i]$ delivered by SLP are significantly more close to the true labels $x[i]$, 
compared to those $\hat{x}_{\rm LP}[i]$ delivered by ordinary LP. 
The corresponding NMSE values are $\varepsilon_{\rm SLP} = 1.3 \cdot 10^{-3}$ and $\varepsilon_{\rm LP} = 12.2 \cdot 10^{-3}$, 
respectively. The convergence of $\varepsilon_{\rm SLP}$ and $\varepsilon_{\rm LP}$ is depicted in Figure \ref{fig_mses_lfr}.

\begin{figure}[!h]
  \centering
  \hspace*{0em}\includegraphics[width=.6\linewidth]{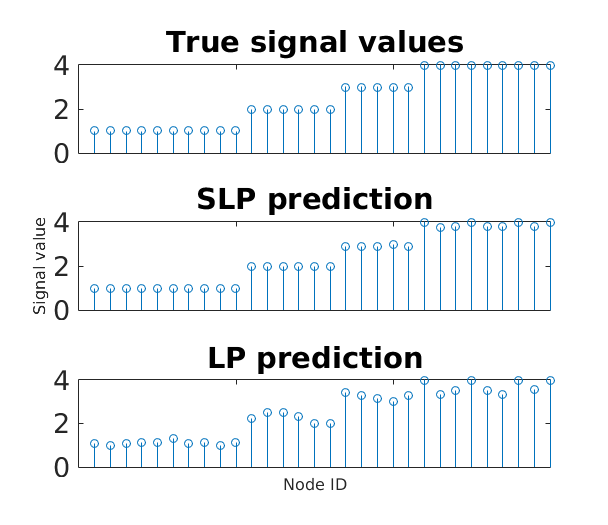}
  \caption{True labels $x[i]$, SLP output $\hat{x}_{\rm SLP}[i]$ and LP output $\hat{x}_{\rm LP}[i]$ for data graph $\graph_{2}$.}
  \label{fig_lpvslp_lfr}
\end{figure}

\begin{figure}[!h]
  \centering
  \hspace*{0em}\includegraphics[width=.6\linewidth]{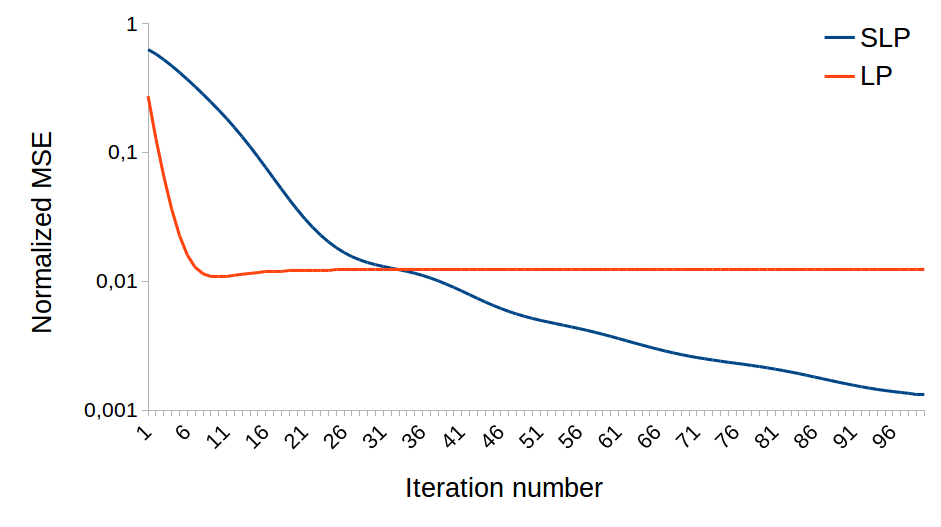}
  \caption{Convergence (of NMSE) of SLP and LP applied to data graph $\graph_{2}$.}
  \label{fig_mses_lfr}
\end{figure}

\subsection{Foreground-Background Segmentation}

Our third experiment considers the problem of segmenting 2D images into foreground and background ~\citep{Rother:2004:GIF:1015706.1015720}. 
In Figure \ref{fig_img1} we show two RGB images (taken from the "grabCut" dataset \citep{Rother:2004:GIF:1015706.1015720}) which are partitioned 
or segmented into three disjoint regions: background $\mathcal{R}_{3}$, foreground $\mathcal{R}_{1}$ 
and ``unknown'' $\mathcal{R}_{2}$. The goal is to correctly assign the pixels in $\mathcal{R}_{2}$ to foreground or background. 

\begin{figure}
\centering
\hspace*{0em}\includegraphics[width=.6\linewidth]{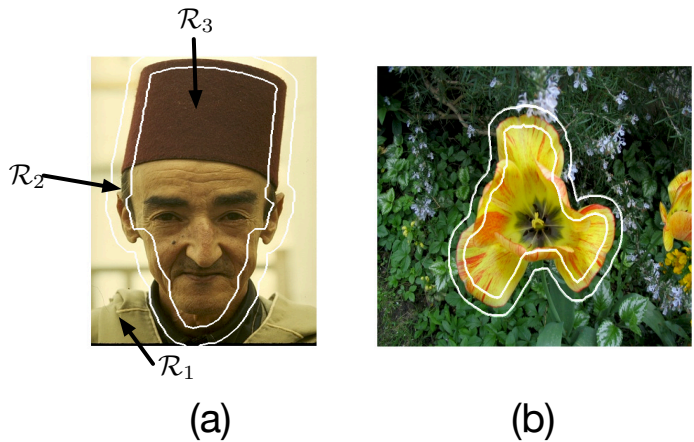}
\caption{Two images from the ``grabcut'' dataset which are segmented into three regions: 
background $\mathcal{R}_{1}$, ``unknown'' $\mathcal{R}_{2}$ and foreground $\mathcal{R}_{3}$. 
The boundaries between the regions are indicated by white stripes.}
\label{fig_img1}
\end{figure}

In order to apply SLP to this segmentation problem, we represent each image in Figure \ref{fig_img1} by a data graph $\graph_{3}$ 
having a grid structure (cf.\ Figure \ref{fig_graph_signals}). In particular, the nodes of the data graph $\graph_{3}$ represent individual 
image pixels and each node is connected to up to four nodes representing 
the left, right, top and bottom neighbouring pixel. The weight $W_{i,j}$ of an edge $\{i,j\} \in \edges$ is set according to 
$W_{i,j} \defeq \exp \big(-(1/\sigma)\| \vv[i]-\vv[j]\|_{2}^{2})$ with $\sigma \defeq {\rm median} \{ \| \vv[i]-\vv[j]\|_{2} \}_{\{i,j\}\in\edges}$. 
Here, the vector $\vv[i]=({\rm red}[i], {\rm green}[i], {\rm blue}[i])^{T} \in \{0,1,...,255\}^3$ represented the RGB code for 
the image pixel represented by node $i\in \nodes$.  

The sampling set used for SLP is $\samplingset = \mathcal{R}_{1} \cup \mathcal{R}_{3}$, where (with slight abuse of notation)
we denote by $\mathcal{R}_{1}$ and $\mathcal{R}_{3}$ the nodes in the data graph $\graph_{3}$ which represent foreground 
and background pixels, respectively. As initial labels, we use  $x[i]=1$ for the background pixels $i \in \mathcal{R}_{1}$ and 
$x[i]=-1$ for all foreground pixels $i\in \mathcal{R}_{3}$. We ran SLP Algorithm \ref{sparse_label_propagation_mp} for a fixed number 
of $500$ iterations. The signs of the learned labels $\hat{x}_{\rm SLP}[i]$ for $i \in \mathcal{R}_{2}$ are then used to determine if 
the pixel $i \in \mathcal{R}_{2}$ belongs to foreground (when $\hat{x}_{\rm SLP}[i]>0$) or background (when $\hat{x}_{\rm SLP}[i]\leq0$). 
The foreground extracted this way for the images in Figure \ref{fig_img1} are depicted in Figure \ref{fig_img_fg}. 
%
%

\begin{figure}
\centering
  \centering
  \hspace*{0em}\includegraphics[width=0.5\linewidth]{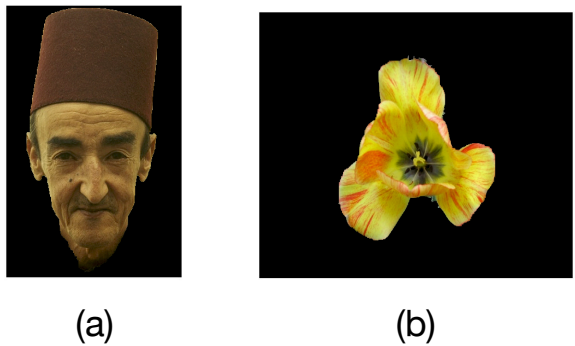}
  \caption{Foregrounds extracted from the images in Figure \ref{fig_img1} using SLP Algorithm \ref{sparse_label_propagation_mp}.}
  \label{fig_img_fg}
\end{figure}


\section{Conclusions}
\label{sec5_conclusion}
We have considered the problem of transductive semi-supervised learning 
from massive network structured datasets, i.e., big data over networks. 
The learning is based on a smoothness assumption, requiring data points within 
well-connected subsets (clusters) have similar labels. Representing the labels 
as a graph signal, we translate this informal smoothness hypothesis into a precise 
graph signal model constituted by clustered graph signals having a small 
total variation. The resulting learning problem lends then naturally to a (nonsmooth) convex 
optimization problem. By applying an efficient primal-dual method to this optimization problem, 
we obtained a sparse variant of LP which learns the labels of all data points 
from a limited amount of initial label information. 
We derived a highly scalable implementation of SLP 
in the form of message passing executed over the underlying data graph. 
Using tools from compressed sensing, we obtained a simple sufficient condition 
on the data graph structure and the set of initially labeled data points which ensures 
SLP to be accurate. This condition, roughly speaking, requires to 
have more initial labels available in the proximity of the cluster boundaries. 
We have also verified the scalability and empirical performance of SLP by means 
of numerical experiments implemented on a big data framework. 


\acks{We would like to acknowledge support for this project
from the Vienna Science Fund (WWTF) Grant ICT15-119 and US ARO grant W911NF-15-1-0479.
}

\appendix
\section*{Appendix - Proof of Main Results}

We begin with a high-level outline of the proofs for 
Theorem \ref{main_thm_exact_sparse} and Theorem \ref{main_thm_approx_sparse}:
The optimization problem \eqref{equ_min_constr} is very similar to the ``analysis  
$\ell_{1}$-minimization'' used for recovery within the cosparse analysis model of compressed sensing ~\citep{CoSparseModel}. 
A sufficient condition for analysis $\ell_{1}$-minimization to deliver the correct solution, 
i.e., the true underlying graph signal $\vx$, is the analysis nullspace property ~\citep{CoSparseModel,KabRau2015Chap}. 
Our approach is then to verify this property for graph signals of the form \eqref{equ_def_clustered_signal_model} and a sampling set $\samplingset$
which resolves the partition $\partition$ used in \eqref{equ_def_clustered_signal_model} (cf.\ Definition \ref{def_sampling_set_resolves}). 

As an intermediate step towards proving our main results Theorem \ref{main_thm_exact_sparse} and \ref{main_thm_approx_sparse}, 
we now reformulate the analysis nullspace property ~\citep{CoSparseModel,KabRau2015Chap} in graph signal terminology. 
\begin{definition} 
\label{def_NNSP}
Consider a graph $\graph$ containing 
the sampling set $\samplingset \subseteq \nodes$. 
Let us define the kernel of the sampling set $\samplingset$ as 
\begin{equation}
\label{equ_def_kernel_sampling_set}
\mathcal{K}(\samplingset) \defeq \{ \tilde{\vx} \in \graphsigs : \tilde{\vx}_{\samplingset} = \mathbf{0} \}. 
\end{equation}
The sampling set $\samplingset$ is said to satisfy the network nullspace property 
(NNSP-$\edgesupport$) w.r.t. an edge set $\edgeset \subseteq \edges$ if 
\begin{equation} 
\label{equ_NSP1}
\| (\mD \vu)_{\edges \setminus \edgeset} \|_{1} \geq 2 \| (\mD \vu)_{\edgeset} \|_{1}  \mbox{ for any } \vu \in \mathcal{K}(\samplingset)\!\setminus\! \{\mathbf{0}\}. 
\end{equation} 
\end{definition} 
Utilizing the network nullspace property, the following lemma characterizes when the learning problem \eqref{equ_min_constr} 
defines an accurate estimate for a graph signal with a small edge support $\supp (\mD \vx)$. 
\begin{lemma} 
\label{lem_NSP1}
Consider a graph signal $\vx \in \graphsigs$ with edge support 
$\edgesupport_{x}=\supp(\mathbf{D}\mathbf{x})$ which 
is observed only at the nodes in the sampling set $\samplingset \subseteq \mathcal{V}$.  
If NNSP-$\edgesupport_{x}$ holds, the solution of \eqref{equ_min_constr} 
is unique and coincides with $\vx$. 
\end{lemma}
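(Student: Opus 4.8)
The plan is to run the classical nullspace-property argument for $\ell_{1}$-recovery, transcribed to the incidence-matrix / total-variation setting. Let $\hat{\vx}$ be an arbitrary solution of \eqref{equ_min_constr} and set $\vu \defeq \hat{\vx} - \vx$. Since the initial labels on $\samplingset$ are the true labels $x_{i} = x[i]$, the true signal $\vx$ is itself feasible for \eqref{equ_min_constr}, so optimality of $\hat{\vx}$ gives $\| \mD \hat{\vx} \|_{1} \leq \| \mD \vx \|_{1}$. Moreover $\hat{\vx}$ and $\vx$ agree on $\samplingset$, hence $\vu_{\samplingset} = \mathbf{0}$, i.e.\ $\vu \in \Ker(\samplingset)$.

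First I would split the $\ell_{1}$-norm of $\mD \hat{\vx} = \mD \vx + \mD \vu$ over the edge support $\edgesupport_{x} = \supp(\mD \vx)$ and its complement. On $\edges \setminus \edgesupport_{x}$ the vector $\mD \vx$ vanishes, so the contribution there is exactly $\| (\mD \vu)_{\edges \setminus \edgesupport_{x}} \|_{1}$; on $\edgesupport_{x}$ the reverse triangle inequality bounds the contribution below by $\| \mD \vx \|_{1} - \| (\mD \vu)_{\edgesupport_{x}} \|_{1}$. Combining these with $\| \mD \hat{\vx} \|_{1} \leq \| \mD \vx \|_{1}$ and cancelling $\| \mD \vx \|_{1}$ yields
\begin{equation}
\| (\mD \vu)_{\edges \setminus \edgesupport_{x}} \|_{1} \leq \| (\mD \vu)_{\edgesupport_{x}} \|_{1}. \nonumber
\end{equation}

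Next, suppose for contradiction that $\vu \neq \mathbf{0}$; then $\vu \in \Ker(\samplingset) \setminus \{\mathbf{0}\}$, so NNSP-$\edgesupport_{x}$ applies and gives $\| (\mD \vu)_{\edges \setminus \edgesupport_{x}} \|_{1} \geq 2 \| (\mD \vu)_{\edgesupport_{x}} \|_{1}$. Chaining this with the inequality above forces $\| (\mD \vu)_{\edgesupport_{x}} \|_{1} = 0$ and hence $\mD \vu = \mathbf{0}$. Since (as assumed without loss of generality in Section \ref{sec_setup}) the data graph $\graph$ is connected, $\mD \vu = \mathbf{0}$ means $\vu$ is constant over all nodes; as $\vu_{\samplingset} = \mathbf{0}$ and $\samplingset \neq \emptyset$, that constant is zero, so $\vu = \mathbf{0}$ --- a contradiction. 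Therefore $\hat{\vx} = \vx$, which shows simultaneously that $\vx$ solves \eqref{equ_min_constr} and that the solution is unique.

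I do not expect a genuine obstacle here; the argument is routine once the ingredients are in place. The only points needing care are (i) observing that the equality constraints make $\vx$ feasible, so the ``optimal beats the truth'' inequality is legitimate, and (ii) passing from $\mD \vu = \mathbf{0}$ to $\vu = \mathbf{0}$, which relies on connectedness of $\graph$ together with nonemptiness of $\samplingset$. The substantive work --- verifying that a sampling set which resolves $\partition$ actually implies NNSP-$\edgesupport_{x}$ for clustered signals of the form \eqref{equ_def_clustered_signal_model} --- is the content of the proof of Theorem \ref{main_thm_exact_sparse} and is not needed here.
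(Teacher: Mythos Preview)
Your proposal is correct and follows essentially the same nullspace-property argument as the paper: both split $\mD\hat{\vx}$ over $\edgesupport_{x}$ and its complement, apply the (reverse) triangle inequality on $\edgesupport_{x}$, and then invoke NNSP-$\edgesupport_{x}$ on the difference $\vu=\hat{\vx}-\vx\in\Ker(\samplingset)$. The paper argues by showing any feasible $\hat{\vx}\neq\vx$ has strictly larger objective, whereas you start from optimality of $\hat{\vx}$ and derive a contradiction; these are two phrasings of the same computation. One point in your favor: you make explicit the passage from $\mD\vu=\mathbf{0}$ to $\vu=\mathbf{0}$ via connectedness of $\graph$ and $\samplingset\neq\emptyset$, which the paper's final strict inequality $\stackrel{\eqref{equ_NSP1}}{>}$ implicitly relies on but does not spell out.
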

\begin{proof}
Let us consider a graph signal $\vx \in \graphsigs$ defined over $\mathcal{G}$ and 
observed over sampling set $\samplingset$ such that condition \eqref{equ_NSP1} is satisfied. 
Assume there exists another graph signal $\hat{\vx}$ ($\neq \vx$) being feasible for 
\eqref{equ_min_constr}, i.e, $\hat{\vx}_{\samplingset} = \vx_{\samplingset}$, 
such that the difference $\vu \defeq \hat{\vx}\!-\!\vx$ belongs to the kernel 
$\mathcal{K}(\samplingset)$ (cf.\ \eqref{equ_def_kernel_sampling_set}). 
Note that, since $(\mD\vx)_{\edgesupport_{x}}=\mD\vx$, 
\begin{equation}
\label{equ_supp_vx_edgesupport}
(\mD\hat{\vx})_{\edges \setminus \edgesupport_{x}} = (\mD\vu)_{\edges \setminus \edgesupport_{x}} 
\end{equation} 
Moreover, by the triangle inequality, we have 
\begin{align}
\label{equ_proof_fist_inequ}
\| (\mD\hat{\vx})_{\edgesupport_{x}} \|_{1} & \geq \| (\mD\vx)_{\edgesupport_{x}} \|_{1} - \| (\mD\vu)_{\edgesupport_{x}} \|_{1}  \nonumber \\ 
  						&  =  \| \mD\vx \|_{1} - \| (\mD\vu)_{\edgesupport_{x}} \|_{1}.
\end{align} 
However, since $\| \mD\hat{\vx} \|_{1}  = \| (\mD\hat{\vx})_{\edgesupport_{x}} \|_{1} + \| (\mD\hat{\vx})_{\edges \setminus \edgesupport_{x}} \|_{1}$, 
\begin{align}
\| \mD\hat{\vx} \|_{1}  & = \| (\mD\hat{\vx})_{\edgesupport_{x}} \|_{1} + \| (\mD\hat{\vx})_{\edges \setminus \edgesupport_{x}} \|_{1} \nonumber \\ 
& \stackrel{\eqref{equ_supp_vx_edgesupport}}{=}\| (\mD\hat{\vx})_{\edgesupport_{x}} \|_{1} + \| (\mD\vu)_{\edges \setminus \edgesupport_{x}} \|_{1} \nonumber \\ 
 & \stackrel{\eqref{equ_proof_fist_inequ}}{\geq}  \| \mD\vx \|_{1} - \| (\mD\vu)_{\edgesupport_{x}} \|_{1} +  \| (\mD\vu)_{\edges \setminus \edgesupport_{x}} \|_{1} \nonumber \\ 
& \stackrel{\eqref{equ_NSP1}}{>} \| \mD\vx \|_{1}. 
\vspace*{-5mm}
\end{align}
\vspace*{-3mm}
\end{proof}
We will also need another more practical result which applies 
to graph signals $\vx \in \graphsigs$ whose associated 
edge signal $\mD\vx \in \edgesigs$ is not strictly sparse 
but which is well concentrated on a small subset $\edgesupport' \subseteq \mathcal{E}$ 
of edges. 
\begin{lemma} 
\label{thm_approx_sparse_stable_result}
Consider a graph signal $\vx \in \graphsigs$ which is observed 
at the nodes in the sampling set $\samplingset \subseteq \nodes$.  
If the condition NNSP-$\edgesupport'$ is valid for 
the edge set $\edgesupport' \subseteq \edges$, then any solution $\hat{\vx}$ 
of \eqref{equ_min_constr} satisfies
\begin{equation} 
\label{equ_bound_stability}
\| \mathbf{D} (\vx - \hat{\vx}) \|_{1} \leq 6 \| (\mathbf{D} \vx)_{\mathcal{E}\setminus \edgesupport'}  \|_{1}.
\end{equation} 
\end{lemma}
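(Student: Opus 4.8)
The plan is to adapt the standard nullspace-property stability argument from compressed sensing to the analysis/total-variation setting. Let $\hat{\vx}$ be any solution of \eqref{equ_min_constr} and set $\vu \defeq \hat{\vx} - \vx$. Because the initial labels are the true ones, $x_{i} = x[i]$ for $i \in \samplingset$, the true signal $\vx$ is itself feasible for \eqref{equ_min_constr}, so optimality of $\hat{\vx}$ gives $\| \mD\hat{\vx} \|_{1} \le \| \mD\vx \|_{1}$; moreover $\hat{\vx}$ and $\vx$ agree on $\samplingset$, hence $\vu_{\samplingset} = \mathbf{0}$, i.e. $\vu \in \mathcal{K}(\samplingset)$. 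If $\vu = \mathbf{0}$ the claim is trivial, so we may assume $\vu \neq \mathbf{0}$, which is exactly the situation in which NNSP-$\edgesupport'$ (cf.\ \eqref{equ_NSP1}) can be invoked.

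The second step is the ``cone estimate''. Split $\edges$ into $\edgesupport'$ and $\edges \setminus \edgesupport'$, write $\| \mD\hat{\vx} \|_{1} = \| (\mD\hat{\vx})_{\edgesupport'} \|_{1} + \| (\mD\hat{\vx})_{\edges \setminus \edgesupport'} \|_{1}$, and lower-bound each block by applying the reverse triangle inequality to $\mD\hat{\vx} = \mD\vx + \mD\vu$. Abbreviating $A \defeq \| (\mD\vu)_{\edgesupport'} \|_{1}$, $B \defeq \| (\mD\vu)_{\edges \setminus \edgesupport'} \|_{1}$ and $C \defeq \| (\mD\vx)_{\edges \setminus \edgesupport'} \|_{1}$, the inequality $\| \mD\hat{\vx} \|_{1} \le \| \mD\vx \|_{1}$ collapses—after cancelling $\| (\mD\vx)_{\edgesupport'} \|_{1}$ from both sides—to $B \le A + 2C$. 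Now feed in NNSP-$\edgesupport'$, which says $B \ge 2A$, i.e. $A \le B/2$; substituting gives $B \le B/2 + 2C$, hence $B \le 4C$ and then $A \le B/2 \le 2C$. Adding, $\| \mD(\vx - \hat{\vx}) \|_{1} = \| \mD\vu \|_{1} = A + B \le 6C$, which is precisely \eqref{equ_bound_stability}.

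Essentially every step here is elementary; I expect the only points needing care to be (i) justifying feasibility of $\vx$ and explicitly disposing of the degenerate case $\vu = \mathbf{0}$, so that NNSP-$\edgesupport'$ genuinely applies, and (ii) keeping the constants straight in the split-plus-triangle-inequality step, so that the cone estimate emerges with the correct factor $2$ multiplying $C$ (an off-by-a-factor here changes the final constant $6$). As a consistency check, taking $\edgesupport' = \edgesupport_{x} = \supp(\mD\vx)$ gives $C = 0$, which forces $\mD\vu = \mathbf{0}$; since $\graph$ is connected this makes $\vu$ constant, and $\vu_{\samplingset} = \mathbf{0}$ with $\samplingset \neq \emptyset$ then yields $\vu = \mathbf{0}$, so Lemma \ref{thm_approx_sparse_stable_result} contains Lemma \ref{lem_NSP1} as a special case.
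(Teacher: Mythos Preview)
Your proof is correct and follows essentially the same route as the paper's: both use optimality of $\hat{\vx}$ together with feasibility of $\vx$ to get $\|\mD\hat{\vx}\|_1 \le \|\mD\vx\|_1$, split over $\edgesupport'$ and $\edges\setminus\edgesupport'$, apply the triangle inequality to obtain the cone estimate $B \le A + 2C$, and then invoke NNSP-$\edgesupport'$ twice to arrive at the constant $6$. Your explicit handling of the degenerate case $\vu=\mathbf{0}$ (needed because Definition~\ref{def_NNSP} excludes $\mathbf{0}$) and the consistency check recovering Lemma~\ref{lem_NSP1} are clean additions that the paper leaves implicit.
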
 
\begin{proof}
The argument closely follows that in the proof of ~\citep[Theorem 8]{KabRau2015}. 
First, observe that for any solution $\hat{\vx}$ of \eqref{equ_min_constr}, 
we have 
\begin{equation}
\label{equ_D_hat_x_1_norm_smaller}
\| \mD \hat{\vx} \|_{1} \leq \| \mD \vx \|_{1}, 
\end{equation}  
since $\vx$ is trivially feasible for \eqref{equ_min_constr}. 
From \eqref{equ_D_hat_x_1_norm_smaller}, we obtain further  
\begin{equation}
\label{equ_proof_stable_NSP_123}
\| (\mD \hat{\vx})_{\edgesupport'}  \|_{1} + \| (\mD \hat{\vx})_{\mathcal{E} \setminus \edgesupport'}  \|_{1} \leq \| (\mD \vx)_{\edgesupport'}  \|_{1} + \| (\mD \vx)_{\mathcal{E} \setminus \edgesupport'}  \|_{1} . 
\end{equation}  
Since $\hat{\vx}$ is feasible for \eqref{equ_min_constr}, i.e., $\hat{\vx}_{\samplingset} = \vx_{\samplingset}$, 
the difference signal $\vv \defeq \hat{\vx} - \vx$ belongs to the kernel $\mathcal{K}(\samplingset)$ (cf.\ \eqref{equ_def_kernel_sampling_set}). 
Applying triangle inequality to \eqref{equ_proof_stable_NSP_123}, 
\begin{align} 
& \| (\mD \vx)_{\edgesupport'}  \|_{1}\!-\!\| (\mD \vv)_{\edgesupport'}  \|_{1}\!-\!\| (\mD \vx)_{\mathcal{E} \setminus \edgesupport'}  \|_{1}\!+\!\| (\mD\vv)_{\mathcal{E} \setminus \edgesupport'}  \|_{1} \leq \nonumber \\ 
&  \| (\mD \vx)_{\edgesupport'}  \|_{1} + \| (\mD \vx)_{\mathcal{E} \setminus \edgesupport'}  \|_{1}, 
\end{align} 
and, in turn, 
\begin{align} 
\| (\mD \vv)_{\mathcal{E} \setminus \edgesupport'}  \|_{1} \leq \| (\mD \vv)_{\edgesupport'}  \|_{1} + 2 \| (\mD \vx)_{\mathcal{E} \setminus \edgesupport'}  \|_{1}. 
\end{align} 
Combining this inequality with \eqref{equ_NSP1} (since the network nullspace property 
is assume to hold) gets us to 
\begin{equation}
\| (\mD \vv)_{\mathcal{E} \setminus \edgesupport'}  \|_{1} \leq (1/2) \| (\mD \vv)_{\edges \setminus \edgesupport'}  \|_{1} + 2 \| (\mD \vx)_{\mathcal{E} \setminus \edgesupport'}  \|_{1}
\end{equation} 
and, in turn, 
\begin{equation}
\label{equ_offsupport_leq_fourtimes}
\| (\mD \vv)_{\mathcal{E} \setminus \edgesupport'}  \|_{1} \leq 4 \| (\mD \vx)_{\mathcal{E} \setminus \edgesupport'}  \|_{1}. 
\end{equation} 
Using the network nullspace property again, 
\begin{align}
\| \mD (\vx - \hat{\vx}) \|_{1} & = \| \mD \vv \|_{1}  \nonumber \\ 
& = \| (\mD \vv)_{\edgesupport'}  \|_{1}\!+\!\| (\mD \vv)_{\mathcal{E} \setminus \edgesupport'}  \|_{1} \nonumber \\ 
& \stackrel{\eqref{equ_NSP1}}{\leq} (3/2) \!\| (\mD \vv)_{\mathcal{E} \setminus \edgesupport'}  \|_{1} \nonumber \\
& \stackrel{\eqref{equ_offsupport_leq_fourtimes}}{\leq} 6  \| (\mD \vx)_{\mathcal{E} \setminus \edgesupport'}  \|_{1}.
\end{align}
\end{proof} 

Let us now render Lemma \ref{lem_NSP1} and Lemma \ref{thm_approx_sparse_stable_result} for 
clustered graph signals $x[i]$ of the form \eqref{equ_def_clustered_signal_model} with a particular partition 
$\partition$ of the data graph into the clusters $\cluster_{l}$. 
In particular, we will give now a sufficient condition on the graph topology, 
characterized by the clusters $\cluster_{l} \in \partition$ used in \eqref{equ_def_clustered_signal_model}, 
and the sampling set $\samplingset$ such that the network nullspace property (cf.\ Definition \ref{def_NNSP}) is satisfied 
for the edge support $\supp(\mD \vx)$ for any clustered graph signal $x[i]$ conforming to \eqref{equ_def_clustered_signal_model}. 

\begin{lemma} 
\label{lem_NNSP_samplingset_suff_recovery}
Consider a partition $\partition=\{\cluster_{1},\ldots,\cluster_{|\mathcal{F}|}\}$ of node subsets (clusters) 
$\cluster_{l}$, which is resolved (cf. Definition \ref{def_sampling_set_resolves}) by the sampling set $\samplingset$. 
Then, 
the condition NNSP-$\edgesupport$ is satisfied for the boundary $\edgesupport = \partial \partition$. 
\end{lemma}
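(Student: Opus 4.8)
The goal is to verify, for an arbitrary nonzero $\vu \in \Ker(\samplingset)$, the inequality $\| (\mD\vu)_{\edges \setminus \boundary} \|_{1} \geq 2\,\| (\mD\vu)_{\boundary} \|_{1}$ defining NNSP-$\boundary$; this is precisely the analysis nullspace property of the cosparse analysis model specialised to the analysis operator $\mD$ and the candidate cosparse support $\boundary = \partial\partition$. The structural input is minimal: $\vu$ vanishes on every sampled node, and, by Definition \ref{def_sampling_set_resolves}, each boundary edge reaches a sampled node from each of its two endpoints along an edge of at least twice the weight. The natural plan is therefore a charging argument: bound the weighted variation of $\vu$ across each boundary edge by the weighted variation across the two ``helper'' edges joining its endpoints to sampled nodes, and then sum over all boundary edges.

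First I would fix a boundary edge $e = \{i,j\} \in \boundary$ and invoke the resolving property to obtain sampled nodes $m,n \in \samplingset$ with $W_{m,i} \geq 2W_{i,j}$ and $W_{n,j} \geq 2W_{i,j}$; both $\{m,i\}$ and $\{n,j\}$ are genuine edges since these weights are strictly positive. Because $\vu \in \Ker(\samplingset)$ we have $u[m] = u[n] = 0$, hence $|(\mD\vu)_{\{m,i\}}| = W_{m,i}\,|u[i]|$ and $|(\mD\vu)_{\{n,j\}}| = W_{n,j}\,|u[j]|$. The triangle inequality together with the two weight bounds then gives
\begin{equation}
2\,|(\mD\vu)_{e}| = 2W_{i,j}\,|u[i]-u[j]| \le 2W_{i,j}|u[i]| + 2W_{i,j}|u[j]| \le W_{m,i}|u[i]| + W_{n,j}|u[j]| = |(\mD\vu)_{\{m,i\}}| + |(\mD\vu)_{\{n,j\}}|. \nonumber
\end{equation}
Summing over $e \in \boundary$ yields $2\,\|(\mD\vu)_{\boundary}\|_{1} \le \sum_{e \in \boundary}\big(|(\mD\vu)_{\{m_{e},i_{e}\}}| + |(\mD\vu)_{\{n_{e},j_{e}\}}|\big)$, so it remains to show that the right-hand side does not exceed $\|(\mD\vu)_{\edges\setminus\boundary}\|_{1}$.

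This last bookkeeping step is where the real work — and, I expect, the main obstacle — lies. Two things must be checked. (i) The helper edges may be taken to be non-boundary: if $\{m,i\}$ were itself a boundary edge, applying the resolving property to it would produce a sampled node joined to $i$ by an edge of weight $\geq 2W_{m,i} \geq 4W_{i,j}$, and iterating would force an infinite, strictly increasing sequence of edge weights at $i$, contradicting finiteness of $\graph$; hence the iteration terminates at a non-boundary helper edge whose weight is still $\geq 2W_{i,j}$. (ii) No non-boundary edge is charged too many times; the delicate case is a node incident to several boundary edges that happen to be served by the same helper edge, and disposing of it requires using the resolving property with care — e.g.\ exploiting that any coincidence routed through a sampled endpoint contributes zero, since $\vu$ vanishes there, and controlling the remaining overlaps through the weight inequalities. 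Once the right-hand side is bounded by $\|(\mD\vu)_{\edges\setminus\boundary}\|_{1}$, combining this with the displayed inequality establishes NNSP-$\boundary$. Finally, feeding this lemma into Lemma \ref{lem_NSP1} with $\edgesupport_{x} = \supp(\mD\vx) \subseteq \boundary$ (which holds for any clustered signal of the form \eqref{equ_def_clustered_signal_model}, and makes NNSP-$\boundary$ imply NNSP-$\edgesupport_{x}$ by monotonicity) yields Theorem \ref{main_thm_exact_sparse}, while feeding it into Lemma \ref{thm_approx_sparse_stable_result} yields Theorem \ref{main_thm_approx_sparse}.
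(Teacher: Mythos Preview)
Your per-edge estimate and overall charging strategy coincide with the paper's proof: the paper also inserts $u[m]=u[n]=0$, applies the triangle inequality, and uses $W_{m,i}\geq 2W_{i,j}$, $W_{n,j}\geq 2W_{i,j}$ to arrive at $\sum_{e\in\boundary}\big(\tfrac12 W_{m,i}|u[i]-u[m]|+\tfrac12 W_{n,j}|u[j]-u[n]|\big)$, after which it simply asserts that this quantity is at most $\tfrac12\|\vv_{\edges\setminus\boundary}\|_{1}$. Your point~(i) is handled in the paper by a tacit strengthening of Definition~\ref{def_sampling_set_resolves}: the proof takes $m\in\cluster_{a}\cap\samplingset$ and $n\in\cluster_{b}\cap\samplingset$, which forces $\{m,i\}$ and $\{n,j\}$ to be intra-cluster, hence non-boundary, automatically; your iteration argument recovers this without that extra assumption.

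You are right, however, to flag point~(ii), and the paper does not address it at all --- it just writes the final ``$\leq$''. The overcounting is a genuine obstruction under Definition~\ref{def_sampling_set_resolves} as literally stated. Take $\cluster_{1}=\{i,m\}$, $\cluster_{2}=\{j_{1},j_{2},n_{1},n_{2}\}$, boundary edges $\{i,j_{1}\}$ and $\{i,j_{2}\}$ of weight~$1$, intra-cluster edges $\{m,i\}$, $\{n_{1},j_{1}\}$, $\{n_{2},j_{2}\}$ each of weight~$2$, and $\samplingset=\{m,n_{1},n_{2}\}$. The partition is resolved (the single edge $\{m,i\}$ serves $i$ for both boundary edges), yet the signal $\vu$ with $u[i]=1$ and all other entries zero lies in $\Ker(\samplingset)$ and gives $\|(\mD\vu)_{\boundary}\|_{1}=2=\|(\mD\vu)_{\edges\setminus\boundary}\|_{1}$, so NNSP-$\boundary$ fails. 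Your suggestion of ``controlling the remaining overlaps through the weight inequalities'' therefore cannot be completed as stated; making the last step rigorous requires either an injectivity clause in the definition (distinct helper edges for distinct boundary edges sharing an endpoint) or a summed weight condition such as $W_{m,i}\geq 2\sum_{j:\{i,j\}\in\boundary}W_{i,j}$. You have located exactly the soft spot in the paper's argument.
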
 
\begin{proof} 
According to Definition \ref{def_sampling_set_resolves}, for any boundary edge 
$e=\{i,j\} \in \edgesupport$, we can find  two nodes $m \in \cluster_{a} \cap \samplingset$ 
and $n \in \cluster_{b} \cap \samplingset$ such that 
\begin{equation}
\label{equ_existence_sampled_nodes_boundary}
\{m,i\}, \{n,j\} \in \edges \mbox{, with }  W_{m,i} \geq 2 W_{i,j} \mbox{, and } W_{n,j} \geq 2W_{i,j}.
\end{equation} 
Let $\vu \in \mathcal{K}(\samplingset)$ be an arbitrary graph signal which vanishes 
on the sampling set $\samplingset$, i.e., 
\begin{equation}
\label{equ_sig_in_kernel_samplingset}
\vu_{\samplingset} = \mathbf{0}.
\end{equation} 
This graph signal induces an edge signal $\vv = \mD \vu$. 
We now verify  $ \| \vv_{\edges \setminus \edgesupport} \|_{1} \geq 2 \| \vv_{\edgesupport} \|_{1} $. 
Indeed, 
\begin{align}
\| \vv_{\edgesupport} \|_{1}  & = \sum_{e=\{i,j\} \in \edgesupport} W_{i,j} |u[i] -u[j]| \nonumber \\ 
    & \stackrel{u[m]=u[n]=0}{=} \sum_{e=\{i,j\} \in \edgesupport} W_{i,j} |u[i]-u[j]-u[m]+u[n]| \nonumber \\ 
    & \leq  \sum_{e=\{i,j\} \in \edgesupport} W_{i,j} |u[i]-u[m]|+ W_{i,j}|u[n]-u[j]|  \nonumber \\ 
    & \stackrel{\eqref{equ_existence_sampled_nodes_boundary}}{\leq} \sum_{e=\{i,j\} \in \edgesupport} (1/2) W_{m,i} |u[i]-u[m]|+ (1/2) W_{n,j} |u[j]-u[n]|  \nonumber \\
    & \leq (1/2) \| \vv_{\mathcal{E} \setminus \edgesupport} \|_{1} . 
\end{align}  
\end{proof} 

We can then verify Theorem \ref{main_thm_exact_sparse} by 
combining Definition \ref{def_sampling_set_resolves} with 
Lemma \ref{lem_NNSP_samplingset_suff_recovery} and Lemma \ref{lem_NSP1}. 
Similarly, we verify Theorem \ref{main_thm_approx_sparse} by combining Definition \ref{def_sampling_set_resolves} with 
Lemma \ref{lem_NNSP_samplingset_suff_recovery} and Lemma \ref{thm_approx_sparse_stable_result}.

\vskip 0.2in
\bibliography{SLPBib}

\end{document}